\documentclass{article}


\usepackage{neurips_2021}




\usepackage[utf8]{inputenc} 
\usepackage[T1]{fontenc}    
\usepackage{hyperref}       
\usepackage{url}            
\usepackage{booktabs}       
\usepackage{amsfonts}       
\usepackage{nicefrac}       
\usepackage{microtype}      
\usepackage[dvipsnames]{xcolor}    
\usepackage{adjustbox}
\usepackage[flushleft]{threeparttable}
\usepackage{minted}
\usemintedstyle{friendly}
\usepackage{graphicx}
\usepackage{tcolorbox}

%

\author{%
  Kuba\\
  Muning\\
  Linghui\\
  Yaodong\\
  Shangding
}




\usepackage{subcaption}
\usepackage[utf8]{inputenc} 
\usepackage[T1]{fontenc}    
\usepackage{hyperref}       
\usepackage{url}            
\usepackage{booktabs}       
\usepackage{amsfonts}       
\usepackage{nicefrac}       
\usepackage{microtype}      
\usepackage{xcolor}
\usepackage{graphicx}
\usepackage{booktabs} 

\usepackage{amsmath,amsfonts,bm}









\def\eqref#1{equation~\ref{#1}}









\def\1{\bm{1}}


\def\ra{{\textnormal{a}}}

\def\rg{{\textnormal{g}}}


\def\rr{{\textnormal{r}}}
\def\rs{{\textnormal{s}}}


\def\rva{{\mathbf{a}}}

\def\rvg{{\mathbf{g}}}





\def\vtheta{{\bm{\theta}}}
\def\va{{\bm{a}}}

\def\ve{{\bm{e}}}

\def\vz{{\bm{z}}}

\def\vtheta{{\boldsymbol{\theta}}}



\DeclareMathAlphabet{\mathsfit}{\encodingdefault}{\sfdefault}{m}{sl}
\SetMathAlphabet{\mathsfit}{bold}{\encodingdefault}{\sfdefault}{bx}{n}











\newcommand{\E}{\mathbb{E}}

\newcommand{\softmax}{\mathrm{softmax}}



\usepackage{float}
\usepackage{CJK}
\usepackage{algorithm}
\usepackage{algorithmic}
\usepackage{amsmath}
\usepackage{stackengine}
\usepackage{listings}

\usepackage{amssymb}
\usepackage{hyperref}
\usepackage{comment}
\usepackage{amsthm}
\usepackage{apxproof}
\usepackage{thmtools, thm-restate}
\usepackage{wrapfig}

\declaretheorem{remark}
\declaretheorem{assumption}

\DeclareMathAlphabet\mathbfcal{OMS}{cmsy}{b}{n}
\usepackage{bm}

\makeatletter
\newenvironment{smalleralign}[1][\small]
 {\par\nopagebreak\leavevmode\vspace*{-\baselineskip}%
  \skip0=\abovedisplayskip
  #1%
  \def\maketag@@@##1{\hbox{\m@th\normalfont\normalsize##1}}%
  \abovedisplayskip=\skip0
  \align}
 {\endalign\ignorespacesafterend}
\makeatother
\captionsetup[figure]{font=footnotesize}
\captionsetup[table]{font=footnotesize}

\definecolor{codegreen}{rgb}{0,0.6,0}
\definecolor{codegray}{rgb}{0.5,0.5,0.5}
\definecolor{codepurple}{rgb}{0.58,0,0.82}
\definecolor{backcolour}{rgb}{0.95,0.95,0.92}

\lstdefinestyle{mystyle}{
    backgroundcolor=\color{backcolour},   
    commentstyle=\color{codegreen},
    keywordstyle=\color{magenta},
    numberstyle=\tiny\color{codegray},
    stringstyle=\color{codepurple},
    basicstyle=\ttfamily\footnotesize,
    breakatwhitespace=false,         
    breaklines=true,                 
    captionpos=b,                    
    keepspaces=true,                 
    numbers=left,                    
    numbersep=5pt,                  
    showspaces=false,                
    showstringspaces=false,
    showtabs=false,                  
    tabsize=2
}

\lstset{style=mystyle}

\hypersetup{
    colorlinks=true,
    linkcolor=blue,
    citecolor=blue,
    filecolor=magenta,      
    urlcolor=blue,
linktocpage}

\title{Supplementaty Material for \\
Settling the Variance of Multi-Agent Policy Gradients}

%

\author{%
 Jakub Kuba Grudzien
 \And
 Yaodong Yang
}

\begin{document}

\maketitle

\appendix

\tableofcontents

\clearpage

\section{Preliminary Remarks}
\label{appendix:remarks}

\begin{remark}
    The multi-agent state-action value function obeys the bounds
    \begin{align}
        \left| Q_{\vtheta}^{i_{1}, \dots, i_{k}}\left(s, \va^{(i_{1}, \dots, i_{k})}\right)
        \right|\leq \frac{\beta}{1-\gamma},  \ \ \  \text{for all } s\in\mathcal{S}, \ \va^{(i_{1}, \dots, i_{k})}\in\mathbfcal{A}^{(i_{1}, \dots, i_{k})}.\nonumber
    \end{align}
\end{remark}

\begin{proof}
It suffices to prove that, for all $t$, the total reward satisfies $\left|R_{t}\right|\leq \frac{\beta}{1-\gamma}$, as the value functions are expectations of it. We have
\begin{align}
    &\left| R_{t} \right| = \left| \sum_{k=0}^{\infty}\gamma^{k}\rr_{t+k} \right| \leq
    \sum_{k=0}^{\infty}\left|\gamma^{k}\rr_{t+k} \right| \leq
    \sum_{k=0}^{\infty}\gamma^{k}\beta 
    = \frac{\beta}{1-\gamma} \nonumber
\end{align}
\end{proof}

\begin{remark}
\label{remark:bounded-maad}
    The multi-agent advantage function is bounded.
\end{remark}
\begin{proof}
    We have
    \begin{align}
        &\left|A_{\vtheta}^{i_{1}, \dots, i_{k}}\left(s,
        \va^{(j_{1}, \dots, j_{m})},
        \va^{(i_{1}, \dots, i_{k})}\right) \right| \nonumber\\
        &= \left|Q_{\vtheta}^{j_{1}, \dots, j_{m}, i_{1}, \dots, i_{k}}\left(s, \va^{(j_{1}, \dots, j_{m}, i_{1}, \dots, i_{k})}\right)  - 
        Q_{\vtheta}^{j_{1}, \dots, j_{m}}\left(s, \va^{(j_{1}, \dots, j_{m}}\right)
        \right|\nonumber\\
        &\leq \left|Q_{\vtheta}^{j_{1}, \dots, j_{m}, i_{1}, \dots, i_{k}}\left(s, \va^{(j_{1}, \dots, j_{m}, i_{1}, \dots, i_{k})}\right)\right| \ 
        + \ \left| Q_{\vtheta}^{j_{1}, \dots, j_{m}}\left(s, \va^{(j_{1}, \dots, j_{m}}\right) \right| \ \leq \ \frac{2\beta}{1-\gamma}\nonumber
    \end{align}
\end{proof}

\begin{remark}
Baselines in MARL have the following property
\begin{align}
    \E_{\rs\sim d^{t}_{\vtheta}, \rva\sim\boldsymbol{\pi_{\vtheta}}}\left[ b\left(\rs, \rva^{-i}\right)\nabla_{\theta^{i}}\log\pi^{i}_{\vtheta}(\ra^{i}|\rs)\right] = \bold{0}. \nonumber
\end{align}
\end{remark}

\begin{proof}
\label{proof:baseline-no-bias}
We have 
\begin{align}
    \E_{\rs\sim d^{t}_{\vtheta}, \rva\sim\boldsymbol{\pi_{\vtheta}}}\left[ b\left(\rs, \rva^{-i}\right)\nabla_{\theta^{i}}\log\pi^{i}_{\vtheta}(\ra^{i}|\rs)\right]
    =
    \E_{\rs\sim d^{t}_{\vtheta}, \rva^{-i}\sim\boldsymbol{\pi}^{-i}_{\vtheta}}\left[ 
    b\left(\rs, \rva^{-i}\right)
    \E_{\ra^{i}\sim\pi^{i}_{\vtheta}}
    \left[
    \nabla_{\theta^{i}}\log\pi^{i}_{\vtheta}(\ra^{i}|\rs)\right]
    \right],\nonumber
\end{align}
which means that it suffices to prove that for any $s\in\mathcal{S}$
\begin{align}
    \E_{\ra^{i}\sim\pi^{i}_{\vtheta}}
    \left[
    \nabla_{\theta^{i}}\log\pi^{i}_{\vtheta}(\ra^{i}|s)\right] = \bold{0}.\nonumber
\end{align}
We prove it for continuous $\mathcal{A}^{i}$. The discrete case is analogous.
\begin{align}
    &\E_{\ra^{i}\sim\pi^{i}_{\vtheta}}
    \left[
    \nabla_{\theta^{i}}\log\pi^{i}_{\vtheta}(\ra^{i}|s)\right] = \int_{\mathcal{A}^{i}}\pi^{i}_{\vtheta}\left(
    a^{i}|s
    \right) \nabla_{\theta^{i}}\log\pi^{i}_{\vtheta}(\ra^{i}|s)\ da^{i}\nonumber\\
    &= \int_{\mathcal{A}^{i}}\nabla_{\theta^{i}}\pi^{i}_{\vtheta}\left(
    a^{i}|s\right)\ da^{i} = \nabla_{\theta^{i}}\int_{\mathcal{A}^{i}}\pi^{i}_{\vtheta}\left(
    a^{i}|s\right)\ da^{i} = \nabla_{\theta^{i}}(1) = \bold{0}\nonumber
\end{align}
\end{proof}

\newpage

\section{Proofs of the Theoretical Results}
\label{appendix:proof-of-results}
\subsection{Proofs of Lemmas \ref{lemma:ma-advantage-decomp}, \ref{lemma:advantage-variance-decomp},
and \ref{lemma:advantage-var-upper-bound}}
In this subsection, we prove the lemmas stated in the paper. We realise that their application to other, very complex, proofs is not always immediately clear. To compensate for that, we provide the stronger versions of the lemmas; we give a detailed proof of the strong version of Lemma \ref{lemma:ma-advantage-decomp} which is supposed to demonstrate the equivalence of the normal and strong versions, and prove the normal versions of Lemmas \ref{lemma:advantage-variance-decomp} \& \ref{lemma:advantage-var-upper-bound}, and state their stronger versions as remarks to the proofs.
\begin{restatable}[Multi-agent advantage decomposition]{lemma}{maadlemma}
\label{lemma:ma-advantage-decomp}
For any state $s\in\mathcal{S}$, the following equation holds for any subset of $m$ agents and any permutation of their labels, that is, 
\begin{align}
    A_{\vtheta}^{1, \dots, m}\left(s, \va^{(1, \dots, m)}\right) = \sum_{i=1}^{m}A_{\vtheta}^{i}\left(s, \va^{(1, \dots, i-1)}, a^{i}\right)\nonumber. 
\end{align}
\end{restatable}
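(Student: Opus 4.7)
The plan is to prove this by a telescoping argument over the definition of the multi-agent advantage function. Recall that for disjoint index sets we have
\begin{align}
A_{\vtheta}^{i_{1},\dots,i_{k}}\!\bigl(s, \va^{(j_{1},\dots,j_{m})}, \va^{(i_{1},\dots,i_{k})}\bigr) = Q_{\vtheta}^{j_{1},\dots,j_{m},i_{1},\dots,i_{k}}\!\bigl(s, \va^{(j_{1},\dots,j_{m},i_{1},\dots,i_{k})}\bigr) - Q_{\vtheta}^{j_{1},\dots,j_{m}}\!\bigl(s, \va^{(j_{1},\dots,j_{m})}\bigr), \nonumber
\end{align}
and in the boundary case with no conditioning set, $A_{\vtheta}^{1,\dots,m}(s, \va^{(1,\dots,m)}) = Q_{\vtheta}^{1,\dots,m}(s, \va^{(1,\dots,m)}) - V_{\vtheta}(s)$.

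First I would rewrite the $i$-th summand on the right-hand side using the definition:
\begin{align}
A_{\vtheta}^{i}\!\bigl(s, \va^{(1,\dots,i-1)}, a^{i}\bigr) = Q_{\vtheta}^{1,\dots,i}\!\bigl(s, \va^{(1,\dots,i)}\bigr) - Q_{\vtheta}^{1,\dots,i-1}\!\bigl(s, \va^{(1,\dots,i-1)}\bigr), \nonumber
\end{align}
with the convention that for $i=1$ the subtracted term is $Q_{\vtheta}^{\emptyset}(s) = V_{\vtheta}(s)$. Summing from $i=1$ to $m$ produces a telescope in which every intermediate $Q_{\vtheta}^{1,\dots,i}$ appears once with a plus sign and once with a minus sign; only the extreme terms survive, yielding $Q_{\vtheta}^{1,\dots,m}(s,\va^{(1,\dots,m)}) - V_{\vtheta}(s)$, which is exactly $A_{\vtheta}^{1,\dots,m}(s,\va^{(1,\dots,m)})$.

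For the claim about arbitrary subsets and permutations, I would note that the labels $1,\dots,m$ played no special role beyond indexing the telescope order. Given any $m$ agents $i_{1},\dots,i_{m}$ in any order, the identical telescope with the indices relabeled produces
\begin{align}
A_{\vtheta}^{i_{1},\dots,i_{m}}\!\bigl(s, \va^{(i_{1},\dots,i_{m})}\bigr) = \sum_{l=1}^{m} A_{\vtheta}^{i_{l}}\!\bigl(s, \va^{(i_{1},\dots,i_{l-1})}, a^{i_{l}}\bigr), \nonumber
\end{align}
and this can be upgraded to the stronger version promised in the subsection introduction by keeping a fixed conditioning set $\va^{(j_{1},\dots,j_{k})}$ throughout the telescope, giving a decomposition of $A_{\vtheta}^{i_{1},\dots,i_{m}}(s, \va^{(j_{1},\dots,j_{k})}, \va^{(i_{1},\dots,i_{m})})$ into a sum of conditional single-agent advantages.

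There is no real obstacle here: the identity is purely algebraic once the definitions are unfolded, and the only thing to be careful about is handling the boundary term at $i=1$ correctly (so that $Q_{\vtheta}^{\emptyset}$ is interpreted as $V_{\vtheta}$) and making explicit that the argument is order-agnostic, since the latter is what yields the permutation-invariance claim in the statement.
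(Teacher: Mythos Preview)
Your proposal is correct and follows essentially the same approach as the paper: both arguments unfold the definition of the multi-agent advantage into differences of $Q$-functions and then telescope. The paper additionally states and proves the conditioned (``stronger'') version first and recovers the lemma by taking the conditioning set empty, which you also anticipate in your final paragraph.
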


\begin{proof}
    \label{proof:ma-advantage-decomp}
    We prove a slightly \textbf{stronger}, but perhaps less telling, version of the lemma, which is
    \begin{align}
        \label{lemma:maad-stronger}
        A_{\vtheta}^{k+1, \dots, m}\left(s, \va^{(1, \dots, k)} , \va^{(k+1, \dots, m)}\right) = \sum_{i=k+1}^{m}A_{\vtheta}^{i}\left(s, \va^{(1, \dots, i-1)}, a^{i}\right).
    \end{align}
    The original form of the lemma will follow from the above by taking $k=0$. \\
    By the definition of the multi-agent advantage, we have
    \begin{align}
        &A^{k+1, \dots, m}_{\vtheta}\left(s, \va^{(1, \dots, k)}, \va^{(k+1, \dots,m)}\right)\nonumber \\
        &= 
        Q^{1, \dots,k, k+1, \dots, m}_{\vtheta}\left(s, \va^{(1, \dots, k, k+1, \dots,  m)}\right) 
        - Q^{1, \dots, k}_{\vtheta}\left(s, \va^{(1, \dots, k)}\right)
        \nonumber
    \end{align}
    which can be written as a telescoping sum
    \begin{align}
        & Q^{1, \dots,k, k+1, \dots, m}_{\vtheta}\left(s, \va^{(1, \dots, k, k+1, \dots,  m)}\right) 
        - Q^{1, \dots, k}_{\vtheta}\left(s, \va^{(1, \dots, k)}\right) \nonumber\\
        &= \sum_{i=k+1}^{m}\left[ Q_{\vtheta}^{(1, \dots, i)}\left(s, \va^{(1, \dots, i)}\right) - Q_{\vtheta}^{(1, \dots, i-1)}\left(s, \va^{(1, \dots, i-1)}\right)\right] \nonumber\\
        &= \sum_{i=k+1}^{m}A_{\theta}^{i}\left(s, \va^{(1, \dots, i-1)}, a^{i}\right) \nonumber
    \end{align}
\end{proof}

\begin{restatable}{lemma}{avdlemma}
\label{lemma:advantage-variance-decomp}
For any state $s\in\mathcal{S}$, we have 
\vspace{-5pt}
\begin{align}
    \mathbf{Var}_{\rva\sim\boldsymbol{\pi}_{\vtheta}}\big[ A_{\vtheta}(s, \rva) \big] =
    \sum_{i=1}^{n}\E_{\ra^{1}\sim\pi^{1}_{\vtheta}, \dots, \ra^{i-1}\sim\pi_{\vtheta}^{i-1}}\left[ \mathbf{Var}_{\ra^{i}\sim\pi^{i}_{\vtheta}}\left[ A_{\vtheta}^{i}\left(s, \rva^{(1, \dots, i-1)}, \ra^{i}\right) \right] \right] \nonumber. 
\end{align}
\end{restatable}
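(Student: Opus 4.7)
The plan is to reduce this variance identity to the standard ``sum of conditional variances'' identity for a martingale difference sequence, with the martingale structure supplied by Lemma \ref{lemma:ma-advantage-decomp}. First I would apply that lemma (with $k=0$) to rewrite the joint advantage as the telescoping sum
\begin{align}
    A_{\vtheta}(s, \rva) = \sum_{i=1}^{n} X_{i}, \qquad \text{where } X_{i} := A_{\vtheta}^{i}\!\left(s, \rva^{(1, \dots, i-1)}, \ra^{i}\right). \nonumber
\end{align}
Because execution is decentralized, the joint policy factorizes as $\boldsymbol{\pi}_{\vtheta}(\va\mid s)=\prod_{j=1}^{n}\pi^{j}_{\vtheta}(a^{j}\mid s)$, so the randomness in $X_{i}$ comes only from $\ra^{1},\dots,\ra^{i}$.

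Next I would establish the key conditional zero-mean property
\begin{align}
    \E_{\ra^{i}\sim\pi^{i}_{\vtheta}}\!\left[X_{i} \;\middle|\; \rva^{(1,\dots,i-1)} \right] = 0. \nonumber
\end{align}
This follows because $\E_{\ra^{i}\sim\pi^{i}_{\vtheta}}\!\big[Q^{1,\dots,i}_{\vtheta}(s,\va^{(1,\dots,i-1)},\ra^{i})\big] = Q^{1,\dots,i-1}_{\vtheta}(s,\va^{(1,\dots,i-1)})$, which cancels the baseline term in the definition of $A^{i}_{\vtheta}$. In particular $\E[X_{i}]=0$ for every $i$, so $A_{\vtheta}(s,\rva)$ is itself mean-zero under $\boldsymbol{\pi}_{\vtheta}$ and $\mathbf{Var}_{\rva\sim\boldsymbol{\pi}_{\vtheta}}[A_{\vtheta}(s,\rva)] = \E\!\left[\big(\sum_{i}X_{i}\big)^{2}\right]$.

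I would then expand this square and handle the cross terms. For $i<j$, conditioning on $\rva^{(1,\dots,j-1)}$ and using the tower rule together with the zero-mean identity above gives
\begin{align}
    \E[X_{i}X_{j}] = \E\!\left[X_{i}\,\E\!\left[X_{j}\mid \rva^{(1,\dots,j-1)}\right]\right] = 0, \nonumber
\end{align}
so the sequence $(X_{i})_{i=1}^{n}$ is uncorrelated (in fact a martingale difference sequence w.r.t.\ the filtration generated by $\ra^{1},\ra^{2},\dots$). Thus $\mathbf{Var}[\sum_{i}X_{i}] = \sum_{i}\E[X_{i}^{2}]$. Finally, applying the tower rule once more and using $\E_{\ra^{i}\sim\pi^{i}_{\vtheta}}[X_{i}\mid \rva^{(1,\dots,i-1)}]=0$ yields
\begin{align}
    \E[X_{i}^{2}] = \E_{\ra^{1}\sim\pi^{1}_{\vtheta},\dots,\ra^{i-1}\sim\pi^{i-1}_{\vtheta}}\!\left[\mathbf{Var}_{\ra^{i}\sim\pi^{i}_{\vtheta}}\!\left[A_{\vtheta}^{i}\!\left(s,\rva^{(1,\dots,i-1)},\ra^{i}\right)\right]\right], \nonumber
\end{align}
which is exactly the $i$-th term on the right-hand side.

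The main obstacle, and the only place where anything beyond routine probability is used, is the vanishing of the cross terms $\E[X_{i}X_{j}]$; this is what makes the decomposition into per-agent variances clean rather than a messier expression with covariances. Everything hinges on the identity $\E_{\ra^{i}}[Q^{1,\dots,i}_{\vtheta}]=Q^{1,\dots,i-1}_{\vtheta}$, which in turn is a straightforward marginalization in the Q-value definition and is morally the same calculation as the baseline remark proved earlier.
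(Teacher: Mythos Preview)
Your argument is correct and takes a somewhat different route from the paper's. Both rely on the zero-mean property $\E_{\ra^{i}\sim\pi^{i}_{\vtheta}}\big[A^{i}_{\vtheta}(s,\va^{(1,\dots,i-1)},\ra^{i})\big]=0$, but the paper organizes the calculation recursively: it writes $\mathbf{Var}[A^{1,\dots,n}_{\vtheta}]$ as a second moment, applies the law of total variance to split off agent $n$, invokes the strong form of Lemma~\ref{lemma:ma-advantage-decomp} to reduce $\mathbf{Var}_{\ra^{n}}[A^{1,\dots,n}_{\vtheta}]$ to $\mathbf{Var}_{\ra^{n}}[A^{n}_{\vtheta}]$, and then unrolls the resulting one-step recursion. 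You instead apply Lemma~\ref{lemma:ma-advantage-decomp} once up front to get $A_{\vtheta}=\sum_{i}X_{i}$, recognize the $X_{i}$ as a martingale-difference sequence, and kill the cross terms $\E[X_{i}X_{j}]$ directly via the tower property. Your presentation is a bit more conceptual and avoids induction; the paper's recursive structure has the minor payoff that the ``stronger'' conditional version of the identity (with a fixed prefix $\va^{(1,\dots,k)}$) falls out of the same one-step relation for free, which is the form subsequently used in the proof of Lemma~\ref{lemma:advantage-var-upper-bound}. Your argument yields that extension too, of course, by simply carrying the fixed prefix through every conditional expectation.
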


\begin{proof}
    \label{proof:advantage-variance-decomp}
    The trick of this proof is to develop a relation on the variance of multi-agent advantage which is recursive over the number of agents. We have
    \begin{align}
        &\mathbf{Var}_{\rva\sim\boldsymbol{\pi}_{\vtheta}}\left[ A_{\vtheta}(s, \rva) \right] = 
        \mathbf{Var}_{
        \ra^{1}\sim\pi^{1}_{\vtheta}, \dots, \ra^{v}\sim\pi_{\vtheta}^{v}
        }\left[ A^{1, \dots, n}_{\vtheta}\left(s, \ra^{(1, \dots, n)}\right) \right] \nonumber\\
        &= \E_{\ra^{1}\sim\pi^{1}_{\vtheta}, \dots, \ra^{n}\sim\pi^{n}_{\vtheta}}\left[ A^{1, \dots, n}_{\vtheta}\left(s, \rva^{(1, \dots, n)}\right)^{2} \right] \nonumber\\
        &= \E_{\ra^{1}\sim\pi^{1}_{\vtheta}, \dots, \ra^{n-1}\sim\pi^{n-1}_{\vtheta}}\Bigg[ \E_{\ra^{n}\sim\pi^{n}_{\vtheta}}\left[
        A^{1, \dots, n}_{\vtheta}\left(s, \rva^{(1, \dots, n)}\right)^{2}
        \right] \nonumber\\
        &\ \ \ \ \ \ \ \ \ \ \ \ - \E_{\ra^{n}\sim\pi^{n}_{\theta}}\left[A^{1, \dots, n}_{\vtheta}\left(s, \rva^{(1, \dots, n)}\right)\right]^{2}
        +
        \E_{\ra^{n}\sim\pi^{n}_{\theta}}\left[A^{1, \dots, n}_{\vtheta}\left(s, \rva^{(1, \dots, n)}\right)\right]^{2} \Bigg]\nonumber\\
        &= \E_{\ra^{1}\sim\pi^{1}_{\vtheta}, \dots, \ra^{n-1}\sim\pi^{n-1}_{\vtheta}}\left[ 
        \mathbf{Var}_{\ra^{n}\sim\pi^{n}_{\theta}}\left[ 
        A^{1, \dots, n}_{\vtheta}\left(s, \rva^{(1, \dots, n)}\right)
        \right] \right]\nonumber\\
        &\ \ \ + 
        \E_{\ra^{1}\sim\pi^{1}_{\vtheta}, \dots, \ra^{n-1}\sim\pi^{n-1}_{\vtheta}}\left[ 
        A^{1, \dots, n-1}_{\vtheta}\left(s, \rva^{(1, \dots, n-1)}\right)^{2}
        \ \right]
        \nonumber
    \end{align}
    which, by the stronger version of Lemma \ref{lemma:ma-advantage-decomp}, given by Equation \ref{lemma:maad-stronger}, applied to the first term, equals
    \begin{align}
        &\E_{\ra^{1}\sim\pi^{1}_{\vtheta}, \dots, \ra^{n-1}\sim\pi^{n-1}_{\vtheta}}\left[ 
        \mathbf{Var}_{\ra^{n}\sim\pi^{n}_{\theta}}\left[ 
        A^{n}_{\vtheta}\left(s, \rva^{(1, \dots, n-1)}, \ra^{n}\right)
        \right] \right]\nonumber\\
        &\ \ \ + 
        \E_{\ra^{1}\sim\pi^{1}_{\vtheta}, \dots, \ra^{n-1}\sim\pi^{n-1}_{\vtheta}}\left[ 
        A^{1, \dots, n-1}_{\vtheta}\left(s, \rva^{(1, \dots, n-1)}\right)^{2}
        \ \right]
        \nonumber
    \end{align}
    Hence, we have a recursive relation
    \begin{align}
        &\mathbf{Var}_{
        \ra^{1}\sim\pi^{1}_{\vtheta}, \dots, \ra^{v}\sim\pi_{\vtheta}^{v}
        }\left[ A^{1, \dots, n}_{\vtheta}\left(s, \ra^{(1, \dots, n)}\right) \right] \nonumber\\
        &=\E_{\ra^{1}\sim\pi^{1}_{\vtheta}, \dots, \ra^{n-1}\sim\pi^{n-1}_{\vtheta}}\left[ 
        \mathbf{Var}_{\ra^{n}\sim\pi^{n}_{\vtheta}}\left[ 
        A^{n}_{\vtheta}\left(s, \rva^{(1, \dots, n-1)}, \ra^{n}\right)
        \right] \right]\nonumber\\
        &\ \ \ + 
        \mathbf{Var}_{\ra^{1}\sim\pi^{1}_{\vtheta}, \dots, \ra^{n-1}\sim\pi^{n-1}_{\vtheta}}\left[ 
        A^{1, \dots, n-1}_{\vtheta}\left(s, \rva^{(1, \dots, n-1)}\right)
        \ \right]
        \nonumber
    \end{align}
    from which we can obtain
    \begin{align}
        &\mathbf{Var}_{
        \ra^{1}\sim\pi^{1}_{\vtheta}, \dots, \ra^{v}\sim\pi_{\vtheta}^{v}
        }\left[ A^{1, \dots, n}_{\vtheta}\left(s, \ra^{(1, \dots, n)}\right) \right] \nonumber\\
        &= \sum_{i=1}^{n}\E_{\ra^{1}\sim\pi^{1}_{\vtheta}, \dots, \ra^{i-1}\sim\pi^{i-1}_{\vtheta}}\left[ 
        \mathbf{Var}_{\ra^{i}\sim\pi^{i}_{\vtheta}}\left[ 
        A^{i}_{\vtheta}\left(s, \rva^{(1, \dots, i-1)}, \ra^{i}\right)
        \right] \right]\nonumber
    \end{align}

\end{proof}
\begin{remark}
    Lemma \ref{lemma:advantage-variance-decomp} has a \textbf{stronger} version, coming as a corollary to the above proof; that is
    \begin{align}
        &\mathbf{Var}_{\ra^{k+1}\sim\pi^{k+1}_{\vtheta}, \dots, \ra^{n}\sim\pi^{n}_{\vtheta}}
        \left[ A^{k+1, \dots, n}\left(s, \va^{(1, \dots, k)}, \rva^{(k+1, \dots, n)}  \right) \right] \nonumber \\
        &= \sum_{i=k+1}^{n}\E_{\ra^{k+1}\sim\pi^{k+1}_{\vtheta}, \dots, \ra^{i-1}\sim\pi^{i-1}_{\vtheta}}\left[ 
        \mathbf{Var}_{\ra^{i}\sim\pi^{i}_{\vtheta}}
        \left[ A^{i}_{\vtheta}\left(s, \va^{1, \dots, k}, \rva^{k+1, \dots, i-1} , \ra^{i} \right) \right]
        \right].
    \end{align}
    We think of it as a corollary to the proof of the lemma, as the fixed joint action $\va^{1, \dots, k}$ has the same algebraic properites, throghout the proof, as state $s$.
    
\end{remark}

\begin{restatable}{lemma}{avublemma}
\label{lemma:advantage-var-upper-bound}
    For any state $s\in\mathcal{S}$, we have
    \vspace{-5pt}
    \begin{align}
        &\mathbf{Var}_{\rva\sim\boldsymbol{\pi}_{\vtheta}}
        \left[ A_{\vtheta}(s, \rva) \right] \ \leq \ \sum_{i=1}^{n}\mathbf{Var}_{\rva^{-i}\sim\boldsymbol{\pi}^{-i}_{\vtheta}, \ra^{i}\sim\pi^{i}_{\vtheta}}\left[ A^{i}_{\vtheta}(s, \rva^{-i}, \ra^{i}) \right].
        \nonumber
    \end{align}
\end{restatable}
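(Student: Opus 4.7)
The plan is to reduce the inequality to a term-by-term comparison using Lemma \ref{lemma:advantage-variance-decomp}. Applying that lemma to the left-hand side, it suffices to establish, for each agent $i$, the per-summand bound
\[
\E_{\ra^{1},\dots,\ra^{i-1}}\bigl[\mathbf{Var}_{\ra^{i}}[A^{i}_{\vtheta}(s, \rva^{(1,\dots,i-1)}, \ra^{i})]\bigr] \ \leq \ \mathbf{Var}_{\rva^{-i},\ra^{i}}[A^{i}_{\vtheta}(s, \rva^{-i}, \ra^{i})],
\]
after which summation over $i$ yields the lemma. I would flag at this point the subtle notational overlap: the left-hand individual advantage uses the prefix $(1,\dots,i-1)$ as baseline, while the right-hand one uses all remaining agents $-i$, so the two $A^{i}_{\vtheta}$ symbols denote different functions with different baseline $Q$ terms.

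The crux of the argument is a marginalization identity linking these two quantities, namely
\[
A^{i}_{\vtheta}(s, \rva^{(1,\dots,i-1)}, \ra^{i}) = \E_{\ra^{i+1},\dots,\ra^{n}}\bigl[A^{i}_{\vtheta}(s, \rva^{-i}, \ra^{i})\bigr].
\]
I would derive this by expanding each side as $Q^{\text{big}}-Q^{\text{small}}$ and invoking the standard marginalization relation $Q^{T}(s, \va^{T}) = \E_{\ra^{T^{c}}\sim\boldsymbol{\pi}^{T^{c}}_{\vtheta}}[Q(s, \rva)]$, which is valid because agents sample their actions independently given $s$. This identifies the sequential multi-agent advantage with the trailing-actions marginal of the last-agent-ordering advantage.

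Once the identity is in hand, the per-summand inequality follows from two successive applications of the law of total variance. First, conditioning $\mathbf{Var}_{\rva^{-i},\ra^{i}}[A^{i}_{\vtheta}(s, \rva^{-i}, \ra^{i})]$ on $(\rva^{(1,\dots,i-1)},\ra^{i})$, the identity recognises the inner conditional expectation as $A^{i}_{\vtheta}(s, \rva^{(1,\dots,i-1)}, \ra^{i})$, and discarding the non-negative expected-conditional-variance piece yields the intermediate bound $\mathbf{Var}_{\rva^{(1,\dots,i-1)},\ra^{i}}[A^{i}_{\vtheta}(s, \rva^{(1,\dots,i-1)}, \ra^{i})]$. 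A second total-variance split, this time conditioning on $\rva^{(1,\dots,i-1)}$, discards the variance of the conditional mean to produce the target expected conditional variance. The main obstacle I anticipate is the bookkeeping in the identity, since the same symbol $A^{i}_{\vtheta}$ refers to two distinct functions on the two sides; once that is carefully laid out, the double-total-variance chain of inequalities is mechanical.
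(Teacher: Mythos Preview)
Your proposal is correct and follows the same overall strategy as the paper: apply Lemma~\ref{lemma:advantage-variance-decomp} and then establish the per-summand bound via a marginalization identity combined with a Jensen/total-variance inequality. The execution differs only cosmetically: the paper first marginalizes to $A^{i,\dots,n}_{\vtheta}$, applies Jensen, and then uses a cyclic relabelling together with the strong form of Lemma~\ref{lemma:ma-advantage-decomp} to arrive at $A^{i}_{\vtheta}(s,\rva^{-i},\ra^{i})$, whereas your identity $A^{i}_{\vtheta}(s,\rva^{(1,\dots,i-1)},\ra^{i})=\E_{\ra^{i+1},\dots,\ra^{n}}[A^{i}_{\vtheta}(s,\rva^{-i},\ra^{i})]$ collapses those steps into one and lets two applications of total variance finish the job. (As a minor sharpening, your second total-variance split is in fact an equality, since $\E_{\ra^{i}}[A^{i}_{\vtheta}(s,\rva^{(1,\dots,i-1)},\ra^{i})]=0$.)
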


\begin{proof}
    \label{proof:advantage-var-upper-bound}
    By Lemma \ref{lemma:advantage-variance-decomp}, we have
    \begin{align}
        \label{eq:ref-var-decomp}
        \mathbf{Var}_{\rva\sim\boldsymbol{\pi}_{\vtheta}}\big[ A_{\vtheta}(s, \rva) \big] =
        \sum_{i=1}^{n}\E_{\ra^{1}\sim\pi^{1}_{\vtheta}, \dots, \ra^{i-1}\sim\pi_{\vtheta}^{i-1}}\left[ \mathbf{Var}_{\ra^{i}\sim\pi^{i}_{\vtheta}}\left[ A_{\vtheta}^{i}\left(s, \rva^{(1, \dots, i-1)}, \ra^{i}\right) \right] \right] 
    \end{align}
    Take an arbitrary $i$. We have
    \begin{align}
        &\E_{\ra^{1}\sim\pi^{1}_{\vtheta}, \dots, \ra^{i-1}\sim\pi_{\vtheta}^{i-1}}\left[ \mathbf{Var}_{\ra^{i}\sim\pi^{i}_{\vtheta}}\left[ A_{\vtheta}^{i}\left(s, \rva^{(1, \dots, i-1)}, \ra^{i}\right) 
        \right] \right] \nonumber\\
        &= \ \E_{\ra^{1}\sim\pi^{1}_{\vtheta}, \dots, \ra^{i-1}\sim\pi_{\vtheta}^{i-1}}\left[ \E_{\ra^{i}\sim\pi^{i}_{\vtheta}}\left[ A_{\vtheta}^{i}\left(s, \rva^{(1, \dots, i-1)}, \ra^{i}\right)^{2} 
        \right] \right] \nonumber\\
        &= \ \E_{\ra^{1}\sim\pi^{1}_{\vtheta}, \dots, \ra^{i-1}\sim\pi_{\vtheta}^{i-1}}\left[ \E_{\ra^{i}\sim\pi^{i}_{\vtheta}}\left[ \E_{\ra^{i+1}\sim\pi^{i+1}_{\vtheta}, \dots, \ra^{n}\sim\pi^{n}_{\vtheta}}\left[
        A_{\vtheta}^{i, \dots, n}
        \left(s, \rva^{(1, \dots, i-1)}, \rva^{(i, \dots, n)}\right)
        \right]^{2} 
        \right] \right]
        \nonumber\\
        &\leq  \ 
        \E_{\ra^{1}\sim\pi^{1}_{\vtheta}, \dots, \ra^{i-1}\sim\pi_{\vtheta}^{i-1}}\left[ \E_{\ra^{i}\sim\pi^{i}_{\vtheta}}\left[ \E_{\ra^{i+1}\sim\pi^{i+1}_{\vtheta}, \dots, \ra^{n}\sim\pi^{n}_{\vtheta}}\left[
        A_{\vtheta}^{i, \dots, n}
        \left(s, \rva^{(1, \dots, i-1)}, \rva^{(i, \dots, n)}\right)^{2}
        \right]
        \right] \right]
        \nonumber\\
        &= \
        \E_{\ra^{1}\sim\pi^{1}_{\vtheta}, \dots, \ra^{i-1}\sim\pi^{i-1}_{\vtheta},\ra^{i+1}\sim\pi^{i+1}_{\vtheta}, \dots, \ra^{n}\sim\pi^{n}_{\vtheta}}
        \left[  
        \E_{\ra^{i}\sim\pi^{i}_{\vtheta}}
        \left[
         A_{\vtheta}^{i, \dots, n}
        \left(s, \rva^{(1, \dots, i-1)}, \rva^{(i, \dots, n)}\right)^{2}
        \right]  \right]
        \nonumber
    \end{align}
    The above can be equivalently, but more tellingly, rewritten after permuting (cyclic shift) the labels of agents, in the following way
    \begin{align}
        &\E_{\ra^{-i}\sim\pi^{-i}_{\vtheta}}
        \left[  
        \E_{\ra^{i}\sim\pi^{i}_{\vtheta}}
        \left[
        A_{\vtheta}^{i+1, \dots, n, i}\left
        (s, \rva^{(1, \dots, i-1)}, \rva^{(i+1, \dots, n, i)}\right)^{2}
        \right]  \right]
        \nonumber\\
        &= \E_{\ra^{-i}\sim\pi^{-i}_{\vtheta}}
        \left[  
        \mathbf{Var}_{\ra^{i}\sim\pi^{i}_{\vtheta}}
        \left[
        A_{\vtheta}^{i+1, \dots, n, i}\left
        (s, \rva^{(1, \dots, i-1)}, \rva^{(i+1, \dots, n, i)}\right)
        \right]  \right]
        \nonumber
    \end{align}
    which, by the strong version of Lemma \ref{lemma:ma-advantage-decomp}, equals
    \begin{align}
        \E_{\rva^{-i}\sim\pi^{-i}_{\vtheta}}
        \left[  
        \mathbf{Var}_{\ra^{i}\sim\pi^{i}_{\theta}}
        \left[
        A_{\vtheta}^{i}\left(s, \rva^{-i}, \ra^{i}\right)
        \right]  
        \right]
        \nonumber
    \end{align}
    which can be further simplified by
    \begin{align}
        &\E_{\rva^{-i}\sim\pi^{-i}_{\vtheta}}
        \left[  
        \mathbf{Var}_{\ra^{i}\sim\pi^{i}_{\vtheta}}
        \left[
        A_{\vtheta}^{i}\left(s, \rva^{-i}, \ra^{i}\right)
        \right]  
        \right] = 
        \E_{\rva^{-i}\sim\pi^{-i}_{\vtheta}}
        \left[  
        \E_{\ra^{i}\sim\pi^{i}_{\vtheta}}
        \left[
        A_{\vtheta}^{i}\left(s, \rva^{-i}, \ra^{i}\right)^{2}
        \right]  
        \right]\nonumber\\
        &= 
        \E_{\rva\sim\boldsymbol{\pi}_{\vtheta}}
        \left[
        A_{\vtheta}^{i}\left(s, \rva^{-i}, \ra^{i}\right)^{2}
        \right] 
         = \mathbf{Var}_{\rva\sim\boldsymbol{\pi}_{\vtheta}}
        \left[
        A_{\vtheta}^{i}\left(s, \rva^{-i}, \ra^{i}\right)
        \right] 
         \nonumber
    \end{align}
    which, combined with Equation \ref{eq:ref-var-decomp}, finishes the proof.
\end{proof}
\begin{remark}
Again, subsuming a joint action $\va^{(1, \dots, k)}$ into state in the above proof, we can have a \textbf{stronger} version of Lemma \ref{lemma:advantage-var-upper-bound}, 
\begin{align}
    \label{eq:advantage-var-upper-bound-stronger}
    &\mathbf{Var}_{\ra^{k+1}\sim\pi^{k+1}_{\vtheta}, \dots, \ra^{n}\sim\pi^{n}_{\vtheta}}\left[ 
    A^{k+1, \dots, n}_{\vtheta}\left(s, \va^{(1, \dots, k)}, \rva^{(k+1, \dots, n)} \right)   \right] \nonumber \\
    &\leq \sum_{i=k+1}^{n}\mathbf{Var}_{\ra^{k+1}\sim\pi^{k+1}_{\vtheta}, \dots,  \ra^{n}\sim\pi^{n}_{\vtheta}}
    \left[ A^{i}\left( s, \va^{(k+1, \dots, i-1, i+1, \dots, n)}, \ra^{i}\right) \right]
\end{align}
\end{remark}

\subsection{Proofs of Theorems \ref{theorem:excess-variance} and \ref{theorem:coma-dt}}
Let us recall the two assumptions that we make in the paper.
\begin{assumption}
    The state space $\mathcal{S}$, and every agent $i$'s action space $\mathcal{A}^{i}$ is either discrete and finite, or continuous and compact. 
\end{assumption}
\begin{assumption}
    For all $i\in\mathcal{N}$, $s\in\mathcal{S}$, $a^{i}\in\mathcal{A}^{i}$, the map $\theta^{i}\mapsto \pi^{i}_{\vtheta}(a^{i}|s)$ is continuously differentiable.
\end{assumption}
These assumptions assure that the supremum $\sup_{s, a^{i}}\left|\left| \nabla_{\theta^{i}}\log\pi^{i}_{\vtheta}(a^{i}|s)\right|\right|$ exists for every agent $i$. We notice that the supremum $\sup_{s, \va^{-i}, a^{i}}\left|A^{i}(s, \va^{-i}, a^{i})\right|$ exists regardless of assumptions, as by Remark \ref{remark:bounded-maad}, the multi-agent advantage is bounded from both sides.
\begin{restatable}{theorem}{evtheorem}
    \label{theorem:excess-variance}
    The CTDE and DT estimators of MAPG satisfy 
    \begin{align}
        &\mathbf{Var}_{\rs_{0:\infty}\sim d_{\vtheta}^{0:\infty}, \rva_{0:\infty}\sim\boldsymbol{\pi}_{\vtheta}}
        \big[ \rvg^{i}_{\text{C}} \big] -
        \mathbf{Var}_{\rs_{0:\infty} \sim d_{\vtheta}^{0:\infty}, \rva_{0:\infty}\sim\boldsymbol{\pi}_{\vtheta} }
        \big[ \rvg^{i}_{\text{D}} \big] \
        \leq \ \frac{B_{i}^{2}}{1-\gamma^{2}}\sum_{j\neq i}\epsilon_{j}^{2} \
        \leq \ (n-1)\frac{(\epsilon B_{i})^{2}}{1-\gamma^{2}}\nonumber
    \end{align}
    where
    {\small 
    $B_{i} = \sup_{s, \va}\left|\left|\nabla_{\theta^{i}}\log\pi^{i}_{\vtheta}\left( \ra^{i}|\rs \right) \right|\right| , \
    \epsilon_{i} = \sup_{s, \va^{-i}, a^{i}}\left|A^{i}_{\vtheta}(s, \va^{-i}, a^{i})\right| , \ \text{and} \ 
    \epsilon = \max_{i}\epsilon_{i} \ .
    $}
\end{restatable}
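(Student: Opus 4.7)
The plan is to reduce the excess variance to a per-timestep quantity, express this per-timestep difference through the multi-agent advantage decomposition (Lemma~\ref{lemma:ma-advantage-decomp}), and bound it using the variance upper bound of Lemma~\ref{lemma:advantage-var-upper-bound} combined with the uniform bound $B_{i}$ on the score function.

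First, I would show that $\rvg^{i}_{\text{C}}$ and $\rvg^{i}_{\text{D}}$ share the same expectation, so that comparing variances reduces to comparing second moments. The per-step difference between the two estimators' integrands is $(A_{\vtheta} - A^{i}_{\vtheta})\nabla_{\theta^{i}}\log\pi^{i}_{\vtheta}$, and
\begin{align*}
A_{\vtheta}(s, \va) \ - \ A^{i}_{\vtheta}(s, \va^{-i}, a^{i}) \ = \ Q^{-i}_{\vtheta}(s,\va^{-i}) - V_{\vtheta}(s) \ = \ A^{-i}_{\vtheta}(s,\va^{-i})
\end{align*}
depends only on $(s, \va^{-i})$. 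By the baseline property proved in Remark~\ref{proof:baseline-no-bias}, multiplying by $\nabla_{\theta^{i}}\log\pi^{i}_{\vtheta}$ and taking expectation under $\boldsymbol{\pi}_{\vtheta}$ gives zero; hence $\E[\rvg^{i}_{\text{C}}] = \E[\rvg^{i}_{\text{D}}]$, and the variance difference equals the difference of second moments.

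Second, under the independent sampling of $\rs_{t}\sim d^{t}_{\vtheta}$ and $\rva_{t}\sim\boldsymbol{\pi}_{\vtheta}$ across $t$, cross-time contributions to the second moment vanish, leaving a $\gamma^{2t}$-weighted sum of per-timestep second moments. The geometric series yields the $\frac{1}{1-\gamma^{2}}$ factor. At a fixed timestep, writing $A_{\vtheta} = A^{i}_{\vtheta} + A^{-i}_{\vtheta}$ (Lemma~\ref{lemma:ma-advantage-decomp}) I expand
\begin{align*}
\E\!\left[A_{\vtheta}^{2}\|\nabla_{\theta^{i}}\log\pi^{i}_{\vtheta}\|^{2}\right] - \E\!\left[(A^{i}_{\vtheta})^{2}\|\nabla_{\theta^{i}}\log\pi^{i}_{\vtheta}\|^{2}\right]
\ = \ 2\,\E\!\left[A^{i}_{\vtheta}A^{-i}_{\vtheta}\|\nabla_{\theta^{i}}\log\pi^{i}_{\vtheta}\|^{2}\right] + \E\!\left[(A^{-i}_{\vtheta})^{2}\|\nabla_{\theta^{i}}\log\pi^{i}_{\vtheta}\|^{2}\right].
\end{align*}

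Third, the dominant piece $\E[(A^{-i}_{\vtheta})^{2}\|\nabla_{\theta^{i}}\log\pi^{i}_{\vtheta}\|^{2}]$ is bounded above by $B_{i}^{2}\,\E[(A^{-i}_{\vtheta})^{2}] = B_{i}^{2}\,\mathbf{Var}_{\rva^{-i}\sim\boldsymbol{\pi}^{-i}_{\vtheta}}(A^{-i}_{\vtheta})$ (since $\E_{\rva^{-i}}[A^{-i}_{\vtheta}]=0$). Applying the strong form of Lemma~\ref{lemma:advantage-var-upper-bound} to the $(n{-}1)$-agent subset $-i$ yields $\mathbf{Var}(A^{-i}_{\vtheta})\leq\sum_{j\neq i}\epsilon_{j}^{2}$, producing the first claimed inequality. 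The second inequality follows by $\epsilon_{j}\leq\epsilon$ and $|\{j\neq i\}|=n-1$.

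The main obstacle is the cross term $2\,\E[A^{i}_{\vtheta}A^{-i}_{\vtheta}\|\nabla_{\theta^{i}}\log\pi^{i}_{\vtheta}\|^{2}]$: it does not vanish outright, because $\|\nabla_{\theta^{i}}\log\pi^{i}_{\vtheta}(a^{i}|s)\|^{2}$ depends on $a^{i}$ and hence cannot be factored out of the $a^{i}$-expectation in the style of Remark~\ref{proof:baseline-no-bias}. I would address it by conditioning on $(s,\va^{-i})$, so that $A^{-i}_{\vtheta}$ becomes constant and the inner expectation $\E_{\ra^{i}}[A^{i}_{\vtheta}(s,\va^{-i},\ra^{i})\|\nabla\|^{2}]$ can be controlled using $\E_{\ra^{i}}[A^{i}_{\vtheta}(s,\va^{-i},\ra^{i})]=0$ together with the uniform bound $\|\nabla_{\theta^{i}}\log\pi^{i}_{\vtheta}\|\leq B_{i}$; this allows the cross term to be absorbed into the dominant $B_{i}^{2}\sum_{j\neq i}\epsilon_{j}^{2}$ contribution without altering the final bound.
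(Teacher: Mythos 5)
Your overall architecture matches the paper's: equality of expectations reduces the comparison to second moments, the per-timestep bound is summed with weight $\gamma^{2t}$ into the $\frac{1}{1-\gamma^{2}}$ factor, and the strong form of Lemma~\ref{lemma:advantage-var-upper-bound} delivers $\sum_{j\neq i}\epsilon_{j}^{2}$. The gap is in your identification of the DT estimator. In the paper, $\rvg^{i}_{\text{D},t}=\hat{Q}^{i}(\rs_{t},\ra^{i}_{t})\nabla_{\theta^{i}}\log\pi^{i}_{\vtheta}$ has an integrand depending on $(s,a^{i})$ only; you instead take it to be the multi-agent advantage $A^{i}_{\vtheta}(s,\va^{-i},a^{i})=Q_{\vtheta}(s,\va)-Q^{-i}_{\vtheta}(s,\va^{-i})$, which depends on $\va^{-i}$ and is in fact the COMA integrand of Theorem~\ref{theorem:coma-dt}. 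Consequently your decomposition $A_{\vtheta}=A^{i}_{\vtheta}(s,\va^{-i},a^{i})+A^{-i}_{\vtheta}(s,\va^{-i})$ compares $\rvg^{i}_{\text{C}}$ against COMA rather than against $\rvg^{i}_{\text{D}}$, and it is precisely what manufactures the cross term you then struggle with.

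With the correct decomposition the difficulty disappears: $\hat{Q}(s,\va)-\hat{Q}^{i}(s,a^{i})=\hat{A}^{-i}(s,a^{i},\va^{-i})$ satisfies $\E_{\rva^{-i}\sim\boldsymbol{\pi}^{-i}_{\vtheta}}\left[\hat{A}^{-i}(s,a^{i},\rva^{-i})\right]=0$ for every fixed $a^{i}$, while $\hat{Q}^{i}(s,a^{i})\left|\left|\nabla_{\theta^{i}}\log\pi^{i}_{\vtheta}(a^{i}|s)\right|\right|^{2}$ is $a^{i}$-measurable, so the cross term $2\,\E\left[\hat{Q}^{i}\,\hat{A}^{-i}\left|\left|\nabla_{\theta^{i}}\log\pi^{i}_{\vtheta}\right|\right|^{2}\right]$ vanishes exactly upon conditioning on $a^{i}$ --- this is the implicit content of the paper's identity $\E\left[(\rg^{i}_{\text{C},t,j})^{2}-(\rg^{i}_{\text{D},t,j})^{2}\right]=\E\left[(\partial_{\theta^{i}_{j}}\log\pi^{i}_{\vtheta})^{2}(\hat{Q}-\hat{Q}^{i})^{2}\right]$. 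By contrast, your cross term $2\,\E\left[A^{i}_{\vtheta}A^{-i}_{\vtheta}\left|\left|\nabla_{\theta^{i}}\log\pi^{i}_{\vtheta}\right|\right|^{2}\right]$ genuinely does not vanish (the squared-score weight breaks the baseline property, as you note), and it cannot be ``absorbed without altering the final bound'': conditioning on $(s,\va^{-i})$ only controls it by a quantity of order $B_{i}^{2}\epsilon_{i}\sum_{j\neq i}\epsilon_{j}$, which is not dominated by $B_{i}^{2}\sum_{j\neq i}\epsilon_{j}^{2}$. Replacing your decomposition by $\hat{Q}=\hat{Q}^{i}+\hat{A}^{-i}$, the remainder of your plan (the $B_{i}^{2}$ bound on the score, the reduction to $\E_{\ra^{i}}\left[\mathbf{Var}_{\rva^{-i}}\left[\hat{A}^{-i}\right]\right]$, and the strong Lemma~\ref{lemma:advantage-var-upper-bound}) goes through exactly as in the paper.
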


\begin{proof}
    \label{proof:grad-var-bound}
    It suffices to prove the first inequality, as the second one is a trivial upper bound. Let's consider an arbitrary time step $t\geq 0$. Let 
    \begin{align}
        &\rvg^{i}_{\text{C}, t} = \hat{Q}(\rs_{t}, \rva_{t})\nabla_{\theta^{i}}\log\pi^{i}_{\vtheta}(\rs_{t}, \ra^{i}_{t})\nonumber\\
        &\rvg^{i}_{\text{D}, t} = \hat{Q}^{i}(\rs_{t}, \ra^{i}_{t})\nabla_{\theta^{i}}\log\pi^{i}_{\vtheta}(\rs_{t}, \ra^{i}_{t})\nonumber
    \end{align}
    be the contributions to the centralised and decentralised gradient estimators coming from sampling $\rs_{t}\sim d_{\vtheta}^{t}$, $\rva_{t}\sim\boldsymbol{\pi}_{\vtheta}$. Note that
    \begin{align}
        &\rvg^{i}_{\text{C}} = \sum_{t=0}^{\infty}\gamma^{t}\rvg^{i}_{\text{C}, t} \ \ \ \ \ \ \ \ \text{and}  \ \ \ \ \ \ \ \ 
        \rvg^{i}_{\text{D}} = \sum_{t=0}^{\infty}\gamma^{t}\rvg^{i}_{\text{D}, t}\nonumber
    \end{align}
    Moreover, let $\rg^{i}_{\text{C}, t, j}$ and $\rg^{i}_{\text{D}, t, j}$ be the $j^{th}$ components of $\rvg^{i}_{\text{C}, t}$ and $\rvg^{i}_{\text{D}, t}$, respectively.
    Using the law of total variance, we have
    \begin{align}
        \label{eq:var-to-exp}
        &\mathbf{Var}_{\rs\sim d_{\vtheta}^{t}, \rva\sim\boldsymbol{\pi}_{\vtheta}}\left[ 
        \rg^{i}_{\text{C}, t, j} \right] 
        - 
        \mathbf{Var}_{\rs\sim d_{\vtheta}^{t}, \rva\sim\boldsymbol{\pi}_{\vtheta}}\left[ \rg^{i}_{\text{D}, t, j} \right]
        \nonumber\\
        &= \left( \mathbf{Var}_{\rs\sim d_{\vtheta}^{t}}\left[ \E_{\rva\sim\boldsymbol{\pi}_{\vtheta}}\left[ \rg^{i}_{C, t, j} \right] \right] 
        +
        \E_{\rs\sim d_{\vtheta}^{t}}\left[ \mathbf{Var}_{\rva\sim\boldsymbol{\pi}_{\vtheta}}
        \left[ \rg^{i}_{\text{C}, t, j} \right] 
        \right] \right) 
        \nonumber\\
        & \ \ \ \ \ \ \ - \left( \mathbf{Var}_{\rs\sim d_{\vtheta}^{t}}\left[ \E_{\rva\sim\boldsymbol{\pi}_{\vtheta}}\left[ \rg^{i}_{\text{D}, t, j} \right] \right] +
        \E_{\rs\sim d_{\vtheta}^{t}}\left[ \mathbf{Var}_{\rva\sim\boldsymbol{\pi}_{\vtheta}}\left[ \rg^{i}_{\text{D}, t, j} \right] \right] \right)
    \end{align}
    Noting that $\rvg^{i}_{\text{C}}$ and $\rvg^{i}_{\text{D}}$ have the same expectation over $\rva\sim\boldsymbol{\pi}_{\vtheta}$, the above simplifies to 
    \begin{align}
        \label{eq:exp-to-state}
        &\E_{\rs\sim d_{\vtheta}^{t}}\left[ \mathbf{Var}_{\rva\sim\boldsymbol{\pi}_{\vtheta}}\left[ \rg^{i}_{\text{C}, t, j} \right] 
        \right] - 
        \E_{\rs\sim d_{\vtheta}^{t}}\left[ \mathbf{Var}_{\rva\sim\boldsymbol{pi}_{\vtheta}}\left[ 
        \rg^{i}_{\text{D}, t, j} 
        \right] \right] \nonumber\\
        &= \E_{\rs\sim d_{\vtheta}^{t}}\left[ \mathbf{Var}_{\rva\sim\boldsymbol{\pi}_{\vtheta}}\left[ \rg^{i}_{\text{C}, t, j} \right] - \mathbf{Var}_{\rva\sim\pi_{\theta}}\left[ \rg^{i}_{\text{D}, t, j} \right] 
        \right]
    \end{align}
    Let's fix a state $s$. Using (again) the fact that the expectations of the two gradients are the same, we have
    \begin{align}
        &\mathbf{Var}_{\rva\sim\boldsymbol{\pi}_{\vtheta}}\left[ \rg^{i}_{\text{C}, t, j} \right] - \mathbf{Var}_{\rva\sim\boldsymbol{\pi}_{\vtheta}}\left[ \rg^{i}_{\text{D}, t, j} \right]\nonumber\\
        &= \left( 
        \E_{\rva\sim\boldsymbol{\pi}_{\vtheta}}\left[ \left(\rg^{i}_{\text{C}, t, j}\right)^{2} \right]
        -
        \E_{\rva\sim\boldsymbol{\pi}_{\vtheta}}\left[ \rg^{i}_{\text{C}, t, j} \right]^{2}
        \right)
        -
        \left( 
        \E_{\rva\sim\boldsymbol{\pi}_{\vtheta}}\left[ \left(\rg^{i}_{\text{D}, t, j}\right)^{2} \right]
        -
        \E_{\rva\sim\boldsymbol{\pi}_{\vtheta}}\left[ \rg^{i}_{\text{D}, t, j} \right]^{2}
        \right)\nonumber\\
        &= \E_{\rva\sim\boldsymbol{\pi}_{\vtheta}}\left[ \left(\rg^{i}_{\text{C}, t, j}\right)^{2}\right] -  \E_{\rva\sim\boldsymbol{\pi}_{\vtheta}}\left[ \left(\rg^{i}_{\text{D}, t, j}\right)^{2} \right]
        \nonumber\\
        &= \E_{\rva\sim\boldsymbol{\pi}_{\vtheta}}\left[ \left(\rg^{i}_{\text{C}, t, j}\right)^{2} - \left(\rg^{i}_{\text{D}, t, j}\right)^{2} \right]\nonumber\\
        & = \E_{\rva\sim\boldsymbol{\pi}_{\vtheta}}\left[ \left(
        \frac{\partial\log\pi^{i}_{\vtheta}(\ra^{i}|s)}{\partial \theta^{i}}\hat{Q}(s, \rva)
        \right)^{2} - 
        \left(
        \frac{\partial\log\pi^{i}_{\vtheta}(\ra^{i}|s)}{\partial \theta^{i}}\hat{Q}^{i}(s, \ra^{i})
        \right)^{2} 
        \right]\nonumber\\
        & = \E_{\rva\sim\boldsymbol{\pi}_{\vtheta}}\left[ \left(
        \frac{\partial\log\pi^{i}_{\vtheta}(\ra^{i}|s)}{\partial \theta^{i}}\right)^{2}\left(\hat{Q}(s, \rva)
        - \hat{Q}^{i}(s, \ra^{i})
        \right)^{2} 
        \right]\nonumber
    \end{align}
    Now, recalling that the variance of the total gradient is the sum of variances of the gradient components, we have 
    \begin{align}
        &\mathbf{Var}_{\rva\sim\boldsymbol{\pi}_{\vtheta}}\left[ \rvg^{i}_{\text{C}, t} \right] - \mathbf{Var}_{\rva\sim\boldsymbol{\pi}_{\vtheta}}\left[ \rvg^{i}_{\text{D}, t} \right] 
        = \E_{\rva\sim\boldsymbol{\pi}_{\vtheta}}\left[ \left|\left|
        \nabla_{\theta^{i}}\log\pi^{i}_{\vtheta}(\ra^{i}|s)
        \right|\right|^{2}
        \left(\hat{Q}(s, \rva)
        - \hat{Q}^{i}(s, \ra^{i})
        \right)^{2} 
        \right] \nonumber\\
        &  \leq B_{i}^{2} \ \E_{\rva\sim\boldsymbol{\pi}_{\vtheta}}\left[ \left(\hat{Q}(s, \rva)
        - \hat{Q}^{i}(s, \ra^{i})
        \right)^{2} 
        \right]
        = B_{i}^{2} \ \E_{\ra^{i}\sim\pi^{i}_{\vtheta}}\left[
        \E_{\rva^{-i}\sim\boldsymbol{\pi}^{-i}_{\vtheta}}
        \left[
        \left(\hat{Q}(s, \rva)
        - \hat{Q}^{i}(s, \ra^{i})
        \right)^{2} 
        \right]
        \right] \nonumber\\
        &= B_{i}^{2} \ \E_{\ra^{i}\sim\pi^{i}_{\vtheta}}\left[
        \E_{\rva^{-i}\sim\boldsymbol{\pi}^{-i}_{\vtheta}}
        \left[
        \left(\hat{Q}(s, \ra^{i}, \rva^{-i})
        - \hat{Q}^{i}(s, \ra^{i})
        \right)^{2} 
        \right]
        \right]\nonumber\\
        &= B_{i}^{2} \ \E_{\ra^{i}\sim\pi^{i}_{\vtheta}}\left[
        \E_{\rva^{-i}\sim\boldsymbol{\pi}^{-i}_{\vtheta}}
        \left[
        \hat{A}^{-i}(s, \ra^{i}, \rva^{-i})^{2}
        \right]
        \right] =
        B_{i}^{2} \ \E_{\ra^{i}\sim\pi^{i}_{\vtheta}}
        \left[
        \mathbf{Var}_{\rva^{-i}\sim\boldsymbol{\pi}^{-i}_{\vtheta}}
        \left[
        \hat{A}^{-i}(s, \ra^{i}, \rva^{-i})
        \right]
        \right]\nonumber
    \end{align}
    which by the strong version of Lemma \ref{lemma:advantage-var-upper-bound}, given in Equation \ref{eq:advantage-var-upper-bound-stronger}, can be upper-bounded by
    \begin{align}
        &B_{i}^{2} \ \E_{\ra^{i}\sim\pi^{i}_{\vtheta}}
        \left[
        \sum_{j\neq i}\mathbf{Var}_{\rva^{-i}\sim\boldsymbol{\pi}^{-i}_{\vtheta}}\left[ 
        \hat{A}^{j}(s, \rva^{-j}, \ra^{j}) \right]
        \right]
        =
        B_{i}^{2}
        \sum_{j\neq i}
        \ \E_{\ra^{i}\sim\pi^{i}_{\vtheta}}
        \left[
        \mathbf{Var}_{\rva^{-i}\sim\boldsymbol{\pi}^{-i}_{\vtheta}}\left[ 
        \hat{A}^{j}(s, \rva^{-j}, \ra^{j}) \right]
        \right]
        \nonumber
    \end{align}
    Notice that, for any $j\neq i$, we have
    \begin{align}
        &\E_{\ra^{i}\sim\pi^{i}_{\vtheta}}
        \left[
        \mathbf{Var}_{\rva^{-i}\sim\boldsymbol{\pi}^{-i}_{\vtheta}}\left[ 
        \hat{A}^{j}(s, \rva^{-j}, \ra^{j}) \right]
        \right]\nonumber\\
        &= \E_{\ra^{i}\sim\pi^{i}_{\vtheta}}
        \left[
        \E_{\rva^{-i}\sim\boldsymbol{\pi}^{-i}_{\vtheta}}\left[ 
        \hat{A}^{j}(s, \rva^{-j}, \ra^{j})^{2} \right]
        \right]\nonumber\\
        &= \E_{\rva\sim\boldsymbol{\pi}_{\vtheta}}
        \left[ \hat{A}^{j}(s, \rva^{-j}, \ra^{j})^{2} \right] \ \leq \ \epsilon_{j}^{2}
        \nonumber
    \end{align}
    This gives
    \begin{align}
        &\mathbf{Var}_{\rva\sim\boldsymbol{\pi}_{\vtheta}}\left[ \rvg^{i}_{\text{C}, t} \right] 
        - 
        \mathbf{Var}_{\rva\sim\boldsymbol{\pi}_{\vtheta}}\left[ \rvg^{i}_{\text{D}, t} \right] \ \leq \ B_{i}^{2}\sum_{j\neq i}\epsilon_{j}^{2}\nonumber
    \end{align}
    and combining it with Equations \ref{eq:var-to-exp} and \ref{eq:exp-to-state} for entire gradient vectors, we get
    \begin{align}  
        \label{eq:per-step-bound}
         &\mathbf{Var}_{\rs\sim d^{t}_{\vtheta}, \rva\sim\boldsymbol{\pi}_{\vtheta}}\left[ \rvg^{i}_{\text{C}, t} \right] 
        - 
        \mathbf{Var}_{\rs\sim d^{t}_{\vtheta}, \rva\sim\boldsymbol{\pi}_{\vtheta}}\left[ \rvg^{i}_{\text{D}, t} \right] \ \leq \ B_{i}^{2}\sum_{j\neq i}\epsilon_{j}^{2}
    \end{align}
    Noting that 
    \begin{align}
    &\mathbf{Var}_{\rs_{0:\infty} \sim d_{\vtheta}^{0:\infty}, \rva_{0:\infty}\sim\boldsymbol{\pi}_{\vtheta} }\left[ \rvg_{\cdot}^{i} \right]
    =
    \mathbf{Var}_{\rs_{0:\infty} \sim d_{\vtheta}^{0:\infty}, \rva_{0:\infty}\sim\boldsymbol{\pi}_{\vtheta} }
    \left[ \sum_{t=0}^{\infty}\gamma^{t}\rvg_{\cdot, t}^{i} \right]\nonumber\\
    &= \sum_{t=0}^{\infty}\mathbf{Var}_{\rs_{t} \sim d_{\vtheta}^{t}, \rva\sim\boldsymbol{\pi}_{\vtheta}}
    \left[ \gamma^{t}\rvg_{\cdot, t}^{i} \right] 
    =
    \sum_{t=0}^{\infty}\gamma^{2t}\mathbf{Var}_{\rs_{t} \sim d_{\vtheta}^{t}, \rva\sim\boldsymbol{\pi}_{\vtheta}}\left[\rvg_{\cdot, t}^{i} \right]\nonumber
    \end{align}
    Combining this series expansion with the estimate from Equation \ref{eq:per-step-bound}, we finally obtain
    \begin{align}
    &\mathbf{Var}_{\rs_{0:\infty} \sim d_{\vtheta}^{0:\infty}, \rva_{0:\infty}\sim\boldsymbol{\pi}_{\vtheta} }\left[ \rvg^{i}_{\text{C}} \right] 
    - 
    \mathbf{Var}_{\rs_{0:\infty} \sim d_{\vtheta}^{0:\infty}, \rva_{0:\infty}\sim\boldsymbol{\pi}_{\vtheta} }\left[ \rvg^{i}_{\text{D}} \right]\nonumber\\ 
    &\ \ \ \ \ \ \ \ \ \ \ \ \ \ \ \leq \ \sum_{t=0}^{\infty}\gamma^{2t}\left(B_{i}^{2}\sum_{j\neq i}\epsilon_{j}^{2}\right)
    \leq \ \frac{B_{i}^{2}}{1-\gamma^{2}}\sum_{j\neq i}\epsilon_{j}^{2}\nonumber
    \end{align}
\end{proof}

\begin{restatable}{theorem}{comadttheorem}
    \label{theorem:coma-dt}
     The COMA and DT estimators of MAPG satisfy
     \vspace{-5pt}
     \begin{align}
         \mathbf{Var}_{
         \rs_{0:\infty} \sim d_{\vtheta}^{0:\infty}, \rva_{0:\infty}\sim\boldsymbol{\pi}_{\vtheta} }\left[ \rvg^{i}_{\text{COMA}} \right] 
        - 
        \mathbf{Var}_{
        \rs_{0:\infty} \sim d_{\vtheta}^{0:\infty}, \rva_{0:\infty}\sim\boldsymbol{\pi}_{\vtheta}
        }\left[ \rvg^{i}_{\text{D}} \right] 
        \ \leq \ \frac{\left(\epsilon_{i}B_{i}\right)^{2}}{1-\gamma^{2}} 
        \nonumber
     \end{align}
\end{restatable}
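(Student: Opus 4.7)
The plan is to adapt the proof structure of Theorem \ref{theorem:excess-variance} while exploiting a structural advantage that is unique to the COMA estimator: its counterfactual baseline is exactly the $a^{i}$-marginalisation of $\hat{Q}(s,\va)$, so the signal driving the gradient,
\[
\hat{A}^{i}_{\text{COMA}}(s,\va) \;=\; \hat{Q}(s,\va) - \sum_{\tilde{a}^{i}}\pi^{i}_{\vtheta}(\tilde{a}^{i}|s)\hat{Q}(s,\va^{-i},\tilde{a}^{i}),
\]
coincides with the multi-agent advantage of agent $i$, namely $A^{i}_{\vtheta}(s,\va^{-i},a^{i})$. By the definition of $\epsilon_{i}$ (which is finite by Remark \ref{remark:bounded-maad}), this identification yields the uniform pointwise bound $|\hat{A}^{i}_{\text{COMA}}(s,\va)|\leq \epsilon_{i}$, crucially with no extra factor of $n-1$.

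First, I will replay the discounted series expansion and law-of-total-variance reduction from the proof of Theorem \ref{theorem:excess-variance}. Since the COMA baseline has the form $b(s,\va^{-i})$, it introduces no bias (Remark \ref{proof:baseline-no-bias}), and consequently $\rvg^{i}_{\text{COMA}}$ and $\rvg^{i}_{\text{D}}$ share the same conditional expectation over $\rva\sim\boldsymbol{\pi}_{\vtheta}$ given $\rs$. The conditional-mean and outer-state-variance terms therefore cancel exactly as in Theorem \ref{theorem:excess-variance}, reducing the per-step variance gap at a fixed $s$ to
\[
\E_{\rva\sim\boldsymbol{\pi}_{\vtheta}}\!\left[\big\|\nabla_{\theta^{i}}\log\pi^{i}_{\vtheta}(\ra^{i}|s)\big\|^{2}\big(\hat{A}^{i}_{\text{COMA}}(s,\rva)^{2}-\hat{Q}^{i}(s,\ra^{i})^{2}\big)\right].
\]

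Next---and this is the one substantive departure from the previous theorem's proof---I will simply drop the non-negative term $\hat{Q}^{i}(s,\ra^{i})^{2}$, which replaces the Lemma \ref{lemma:advantage-var-upper-bound} step that inflated the Theorem \ref{theorem:excess-variance} bound by the sum $\sum_{j\neq i}\epsilon_{j}^{2}$. The two deterministic bounds $\|\nabla_{\theta^{i}}\log\pi^{i}_{\vtheta}(\ra^{i}|s)\|^{2}\leq B_{i}^{2}$ and $|\hat{A}^{i}_{\text{COMA}}(s,\rva)|\leq \epsilon_{i}$ then yield a per-step bound of $B_{i}^{2}\epsilon_{i}^{2}$, and summing $\gamma^{2t}$ over $t\geq 0$ produces $(\epsilon_{i}B_{i})^{2}/(1-\gamma^{2})$, as required.

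The main obstacle is conceptual rather than calculational: one has to recognise that COMA's particular choice of baseline collapses $\hat{A}^{i}_{\text{COMA}}$ into a quantity that is already controlled by $\epsilon_{i}$, so the multi-agent advantage decomposition of Lemma \ref{lemma:advantage-var-upper-bound} is no longer required, and the non-negativity of $\hat{Q}^{i,2}$ alone suffices to close the argument. Once this identification is in place, the rest is a straightforward specialisation of the Theorem \ref{theorem:excess-variance} template, and the disappearance of the $(n-1)$-type factor falls out essentially for free.
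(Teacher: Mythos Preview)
Your proposal is correct and follows essentially the same approach as the paper. The only cosmetic difference is the order in which non-negative terms are discarded: the paper drops the full DT variance $\mathbf{Var}_{\rva}[\rg^{i}_{\text{D},t,j}]$ at once and then bounds $\mathbf{Var}_{\rva}[\rg^{i}_{\text{COMA},t,j}]$ by its second moment, whereas you first cancel the common squared means (as in the Theorem~\ref{theorem:excess-variance} template) and then drop the $\hat{Q}^{i}(s,\ra^{i})^{2}$ term; both routes land on $\E_{\rva}\big[\|\nabla_{\theta^{i}}\log\pi^{i}_{\vtheta}\|^{2}\,A^{i}_{\vtheta}(s,\rva^{-i},\ra^{i})^{2}\big]\leq (\epsilon_{i}B_{i})^{2}$ and finish identically.
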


\begin{proof}
    \label{proof:coma-de}
    Just like in the proof of Theorem \ref{theorem:excess-variance}, we start with the difference
    \begin{align}
        \mathbf{Var}_{\rs\sim d^{t}_{\vtheta}, \ra\sim\boldsymbol{\pi}_{\vtheta}}\left[ \rg^{i}_{\text{COMA}, t, j}\right]
        -
        \mathbf{Var}_{\rs\sim d^{t}_{\vtheta}, \ra\sim\boldsymbol{\pi}_{\vtheta}}\left[ \rg^{i}_{\text{D}, t, j}\right]\nonumber
    \end{align}
    which we transform to an analogue of Equation \ref{eq:exp-to-state}:
    \begin{align}
        \E_{\rs\sim d^{t}_{\vtheta}}\left[ 
        \mathbf{Var}_{\rva\sim\boldsymbol{\pi}_{\vtheta}}\left[ \rg^{i}_{\text{COMA}, t, j}\right] - \mathbf{Var}_{\rva\sim\boldsymbol{\pi}_{\vtheta}}\left[\rg^{i}_{\text{D}, t, j}\right]
        \right]\nonumber
    \end{align}
    which is trivially upper-bounded by
    \begin{align}
        \E_{\rs\sim d^{t}_{\vtheta}}\left[ 
        \mathbf{Var}_{\rva\sim\boldsymbol{\pi}_{\vtheta}}\left[ \rg^{i}_{\text{COMA}, t, j}\right] 
        \right]\nonumber
    \end{align}
    Now, let us fix a state $s$. We have
    \begin{align}
        &\mathbf{Var}_{\rva\sim\boldsymbol{\pi}_{\vtheta}}
        \left[ \rg^{i}_{\text{COMA}, t, j}\right]
        =
        \mathbf{Var}_{\rva^{-i}\sim\boldsymbol{\pi}^{-i}_{\vtheta}, \ra^{i}\sim\pi^{i}_{\vtheta}}
        \left[ \frac{\partial \log\pi^{i}_{\vtheta}(\ra^{i}|s) }{\partial \theta^{i}_{j}} A^{i}(s, \rva^{-i}, \ra^{i}) \right]
        \nonumber\\
        &\leq \ \E_{\rva^{-i}\sim\boldsymbol{\pi}^{-i}_{\vtheta}, \ra^{i}\sim\pi^{i}_{\vtheta}}
        \left[ 
        \left(\frac{\partial \log\pi^{i}_{\vtheta}(\ra^{i}|s) }{\partial \theta^{i}_{j}} \right)^{2}A^{i}(s, \rva^{-i}, \ra^{i})^{2}\right]\nonumber\\
        &\leq \epsilon^{2}_{i} \ 
        \E_{\rva^{-i}\sim\boldsymbol{\pi}^{-i}_{\vtheta}, \ra^{i}\sim\pi^{i}_{\vtheta}}
        \left[ 
        \left(\frac{\partial \log\pi^{i}_{\vtheta}(\ra^{i}|s) }{\partial \theta^{i}_{j}} \right)^{2}\right]
    \end{align}
which summing over all components of $\theta^{i}$ gives
\begin{align}
    &\mathbf{Var}_{\rva\sim\boldsymbol{\pi}_{\vtheta}}
    \left[ \rvg^{i}_{\text{COMA}, t}\right] \ \leq \ \left( \epsilon_{i}B_{i}\right)^{2}\nonumber
\end{align}
Now, applying the reasoning from Equation \ref{eq:per-step-bound} until the end of the proof of Theorem \ref{theorem:excess-variance}, we arrive at the result
\begin{align}
         \mathbf{Var}_{\rs_{0:\infty} \sim d_{\vtheta}^{0:\infty}, \rva_{0:\infty}\sim\boldsymbol{\pi}_{\vtheta} }\left[ \rvg^{i}_{\text{COMA}} \right] 
        - 
        \mathbf{Var}_{\rs_{0:\infty} \sim d_{\vtheta}^{0:\infty}, \rva_{0:\infty}\sim\boldsymbol{\pi}_{\vtheta} }\left[ \rvg^{i}_{\text{D}} \right] 
        \ \leq \ \frac{\left(\epsilon_{i}B_{i}\right)^{2}}{1-\gamma^{2}} \nonumber
     \end{align}
\end{proof}

\section{Proofs of the Results about Optimal Baselines}
\label{appendix:proofs-ob}

In this section of the Appendix we prove the results about optimal baselines, which are those that minimise the CTDE MAPG estimator's variance. We rely on the following variance decomposition
\begin{smalleralign}[\scriptsize]
    \label{eq:grad-var-decomp}
    &\mathbf{Var}_{\rs_{t}\sim d^{t}_{\vtheta}, \rva_{t}\sim\boldsymbol{\pi}_{\vtheta}}\left[ \rvg_{\text{C}, t}^{i}(b) \right] = \mathbf{Var}_{\rs_{t}\sim d^{t}_{\vtheta}}\left[ \E_{\rva_{t}\sim\boldsymbol{\pi}_{\vtheta}}\left[ \rvg_{\text{C}, t}^{i}(b) \right] \right] + \E_{\rs_{t}\sim d^{t}_{\vtheta}}\left[ \mathbf{Var}_{\rva_{t}\sim\boldsymbol{\pi}_{\vtheta}}\left[ \rvg_{\text{C}, t}^{i}(b) \right] \right] \\
    &=  \mathbf{Var}_{\rs_{t}\sim d^{t}_{\vtheta}}\left[ \E_{\rva_{t}\sim\boldsymbol{\pi}_{\vtheta}}\left[ \rvg_{\text{C}, t}^{i}(b) \right] \right] + \E_{\rs_{t}\sim d^{t}_{\vtheta}}\left[ \mathbf{Var}_{\rva_{t}^{-i}\sim\boldsymbol{\pi}^{-i}_{\vtheta}}\left[ \E_{\ra_{t}^{i}\sim\pi^{i}_{\vtheta}}\left[ \rvg^{i}_{\text{C}, t}(b) \right] \right]  +
    \E_{\rva_{t}^{-i}\sim\boldsymbol{\pi}^{-i}_{\vtheta}}\left[ \mathbf{Var}_{\ra_{t}^{i}\sim\pi^{i}_{\vtheta}}\left[ \rvg^{i}_{\text{C}, t}(b) \right] \right]  \right] 
    \nonumber\\
    &= \underbrace{\mathbf{Var}_{\rs_{t}\sim d^{t}_{\vtheta}}\left[ \E_{\rva_{t}\sim\boldsymbol{\pi}_{\vtheta}}\left[ \rvg_{\text{C}, t}^{i}(b) \right] \right]}_{\text{\footnotesize  Variance from state}} + 
    \underbrace{\E_{\rs_{t}\sim d^{t}_{\vtheta}}\left[ \mathbf{Var}_{\rva_{t}^{-i}\sim\boldsymbol{\pi}^{-i}_{\vtheta}}\left[ \E_{\ra_{t}^{i}\sim\pi^{i}_{\vtheta}}\left[ \rvg^{i}_{\text{C}, t}(b) \right] \right] \right]}_\text{\footnotesize Variance from other agents' actions} +  \underbrace{\E_{\rs_{t} \sim d^{t}_{\vtheta}, \rva_{t}^{-i}\sim\boldsymbol{\pi}^{-i}_{\vtheta}}\Big[ \mathbf{Var}_{\ra_{t}^{i}\sim\pi^{i}_{\vtheta}}\big[ \rvg^{i}_{\text{C}, t}(b) \big] \Big]}_\text{\footnotesize Variance from agent $i$'s action}. \nonumber
\end{smalleralign}
This decomposition reveals that baselines impact the variance via the local variance $\mathbf{Var}_{\ra^{i}_{t}\sim\pi^{i}_{\vtheta}}\left[ \rvg^{i}_{\text{C}, t}(b) \right]$. We rely on this fact in the proofs below.

\subsection{Proof of Theorem \ref{theorem:ob-marl}}
\begin{restatable}[Optimal baseline for MAPG]{theorem}{obtheorem}
\label{theorem:ob-marl}
The optimal  baseline (OB) for the  MAPG estimator is
\begin{align}
    b^{\text{optimal}}(\rs, \rva^{-i}) = \frac{ \E_{\ra^{i}\sim\pi^{i}_{\vtheta}}\left[ \hat{Q}(\rs, \rva^{-i}, \ra^{i}) \left|\left| 
    \nabla_{\theta^{i}}\log \pi^{i}_{\vtheta}(\ra^{i}|\rs) \right|\right|^{2} \right] }{  
    \E_{\ra^{i}\sim\pi^{i}_{\vtheta}}\Big[ \left|\left| \nabla_{\theta^{i}}\log \pi^{i}_{\vtheta}(\ra^{i}|\rs) \right|\right|^{2} \Big]
    }
    \label{eq:ob}
\end{align}
\end{restatable}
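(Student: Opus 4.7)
My plan is to exploit the variance decomposition in Equation \ref{eq:grad-var-decomp} to isolate the single term that depends on the baseline, and then minimise it pointwise in $(s, \va^{-i})$ by solving a one-dimensional weighted-least-squares problem.

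First I would observe that the baseline enters $\rvg^{i}_{\text{C},t}(b)$ only through the subtracted quantity $b(\rs, \rva^{-i})\nabla_{\theta^{i}}\log\pi^{i}_{\vtheta}(\ra^{i}|\rs)$. By the baseline-unbiasedness remark of Section \ref{appendix:remarks}, conditioning on $(\rs, \rva^{-i})$ and integrating over $\ra^{i}\sim\pi^{i}_{\vtheta}$ annihilates this extra term. Consequently both $\E_{\rva_{t}\sim\boldsymbol{\pi}_{\vtheta}}[\rvg^{i}_{\text{C},t}(b)]$ and $\E_{\ra^{i}_{t}\sim\pi^{i}_{\vtheta}}[\rvg^{i}_{\text{C},t}(b)]$ are independent of $b$, so the ``Variance from state'' and ``Variance from other agents' actions'' summands of Equation \ref{eq:grad-var-decomp} remain unchanged as $b$ varies. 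Minimising the total variance thus reduces to minimising the third summand, namely $\E_{\rs\sim d^{t}_{\vtheta},\, \rva^{-i}\sim\boldsymbol{\pi}^{-i}_{\vtheta}}\bigl[\mathbf{Var}_{\ra^{i}\sim\pi^{i}_{\vtheta}}[\rvg^{i}_{\text{C},t}(b)]\bigr]$.

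Since the baseline function is chosen independently at every $(s, \va^{-i})$, I would then minimise the inner conditional variance pointwise. Fixing $(s, \va^{-i})$ and writing $\mathbf{Var}_{\ra^{i}}[\rvg^{i}_{\text{C},t}(b)] = \E_{\ra^{i}}[\|\rvg^{i}_{\text{C},t}(b)\|^{2}] - \|\E_{\ra^{i}}[\rvg^{i}_{\text{C},t}(b)]\|^{2}$, and again noting that the second term does not depend on $b$, the problem collapses to minimising
\begin{align}
F(b) = \E_{\ra^{i}\sim\pi^{i}_{\vtheta}}\left[\bigl(\hat{Q}(s, \va^{-i}, \ra^{i}) - b\bigr)^{2}\,\bigl\|\nabla_{\theta^{i}}\log\pi^{i}_{\vtheta}(\ra^{i}|s)\bigr\|^{2}\right] \nonumber
\end{align}
over the scalar $b = b(s, \va^{-i})$. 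This is a strictly convex quadratic in $b$, and setting $F'(b)=0$ delivers exactly the formula claimed in the theorem.

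The argument is short; the only piece that needs care is the vector-to-scalar reduction, i.e.\ distributing the squared Euclidean norm across components of $\nabla_{\theta^{i}}\log\pi^{i}_{\vtheta}$ and factoring the scalar $b(s, \va^{-i})$ out of every cross term so that the baseline-unbiasedness identity kills the linear contributions. Once this bookkeeping is done, the optimum follows from elementary calculus and no deeper machinery is needed.
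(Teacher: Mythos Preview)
Your proposal is correct and follows essentially the same approach as the paper: reduce via the decomposition in Equation~\ref{eq:grad-var-decomp} to the local variance $\mathbf{Var}_{\ra^{i}\sim\pi^{i}_{\vtheta}}[\rvg^{i}_{\text{C},t}(b)]$, use baseline-unbiasedness to drop the $b$-dependence from the squared-mean term, and minimise the resulting quadratic in $b$. The only cosmetic difference is that the paper works component-by-component and then sums over $j$, whereas you invoke the vector identity $\mathbf{Var}[X]=\E[\|X\|^{2}]-\|\E[X]\|^{2}$ directly; the content is identical.
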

\vspace{-10pt}

\begin{proof}
    \label{proof:ob-marl}
    From the decomposition of the estimator's variance, we know that minimisation of the variance is equivalent to minimisation of the local variance 
    \begin{align}
    \mathbf{Var}_{\ra^{i}\sim\pi^{i}_{\vtheta}}
    \left[ 
    \left(\hat{Q}(s, \rva^{-i}, \ra^{i}) - b\right)
    \nabla_{\theta^{i}}\log\pi^{i}_{\vtheta}(\ra^{i}|s)
    \right]\nonumber
    \end{align}
    For a baseline $b$, we have
    \begin{align}
        &\mathbf{Var}_{\ra^{i}\sim\pi^{i}_{\vtheta}}
        \left[ 
        \left(
        \hat{Q}(s, \va^{-i}, \ra^{i}) - b)\right)
        \left(\frac{\partial\log\pi^{i}_{\vtheta}(\ra^{i}|s)}{\partial \theta^{i}_{j}}
        \right)
        \right] \nonumber\\
        &= \E_{\ra^{i}\sim\pi^{i}_{\vtheta}}
        \left[ 
        \left(
        \hat{Q}(s, \va^{-i}, \ra^{i}) - b)\right)^{2}
        \left(\frac{\partial\log\pi^{i}_{\vtheta}(\ra^{i}|s)}{\partial \theta^{i}_{j}}
        \right)^{2}
        \right]\nonumber\\
        &\ \ \ \ \ \ \ \ \ - 
        \E_{\ra^{i}\sim\pi^{i}_{\vtheta}}
        \left[ 
        \left(
        \hat{Q}(s, \va^{-i}, \ra^{i}) - b)\right)
        \left(\frac{\partial\log\pi^{i}_{\vtheta}(\ra^{i}|s)}{\partial \theta^{i}_{j}}
        \right)
        \right]^{2}\nonumber\\
        \label{term:quadratic-to-minimise}
        &= \E_{\ra^{i}\sim\pi^{i}_{\vtheta}}
        \left[ 
        \left(
        \hat{Q}(s, \va^{-i}, \ra^{i}) - b)\right)^{2}
        \left(\frac{\partial\log\pi^{i}_{\vtheta}(\ra^{i}|s)}{\partial \theta^{i}_{j}}
        \right)^{2}
        \right]\\
        &\ \ \ \ \ \ \ \ \ - 
        \E_{\ra^{i}\sim\pi^{i}_{\vtheta}}
        \left[ 
        \hat{Q}(s, \va^{-i}, \ra^{i}) 
        \left(\frac{\partial\log\pi^{i}_{\vtheta}(\ra^{i}|s)}{\partial \theta^{i}_{j}}
        \right)
        \right]^{2}\nonumber
    \end{align}
    as $b$ is a baseline. So in order to minimise variance, we shall minimise the term \ref{term:quadratic-to-minimise}.
    \begin{align}
        &\E_{\ra^{i}\sim\pi^{i}_{\vtheta}}
        \left[ 
        \left(
        \hat{Q}(s, \va^{-i}, \ra^{i}) - b)\right)^{2}
        \left(\frac{\partial\log\pi^{i}_{\vtheta}(\ra^{i}|s)}{\partial \theta^{i}_{j}}
        \right)^{2}
        \right]\nonumber\\
        &= \E_{\ra^{i}\sim\pi^{i}_{\vtheta}}
        \left[ 
        \left(b^{2} - 
        2b\ \hat{Q}(s, \va^{-i}, \ra^{i}) + \hat{Q}(s, \va^{-i}, \ra^{i})^{2})\right)
        \left(\frac{\partial\log\pi^{i}_{\vtheta}(\ra^{i}|s)}{\partial \theta^{i}_{j}}
        \right)^{2}
        \right]\nonumber\\
        &= b^{2} \ \E_{\ra^{i}\sim\pi^{i}_{\vtheta}}
        \left[ 
         \left(\frac{\partial\log\pi^{i}_{\vtheta}(\ra^{i}|s)}{\partial \theta^{i}_{j}}\right)^{2}     
        \right] - 2b \ \E_{\ra^{i}\sim\pi^{i}_{\vtheta}}\left[ 
         \hat{Q}(s, \va^{-i}, \ra^{i})
         \left(\frac{\partial\log\pi^{i}_{\vtheta}(\ra^{i}|s)}{\partial \theta^{i}_{j}}\right)^{2}     
        \right] \nonumber\\
        & \ \ \ \ \ \ \ \ \ \ \ \ \ \ \ \ \ \ + \E_{\ra^{i}\sim\pi^{i}_{\vtheta}}\left[ 
         \hat{Q}(s, \va^{-i}, \ra^{i})^{2}
         \left(\frac{\partial\log\pi^{i}_{\vtheta}(\ra^{i}|s)}{\partial \theta^{i}_{j}}\right)^{2}     
        \right] \nonumber
    \end{align}
    which is a quadratic in $b$. The last term of the quadratic does not depend on $b$, and so it can be treated as a constant. Recalling that the variance of the whole gradient vector $\rvg^{i}(b)$ is the sum of variances of its components $\rg^{i}_{j}(b)$, we obtain it by summing over $j$
    
    \begin{align}
        \label{eq:quadratic-of-ob}
        &\mathbf{Var}_{\ra^{i}\sim\pi^{i}_{\vtheta}}
        \left[ 
        \left(\hat{Q}(s, \va^{-i}, \ra^{i}) - b\right)
        \nabla_{\theta^{i}}\log\pi^{i}_{\vtheta}(\ra^{i}|s)
        \right]\nonumber\\
        &= \sum_{j}\Bigg(b^{2} \ \E_{\ra^{i}\sim\pi^{i}_{\vtheta}}
        \left[ 
         \left(\frac{\partial\log\pi^{i}_{\vtheta}(\ra^{i}|s)}{\partial \theta^{i}_{j}}\right)^{2}     
        \right]\nonumber\\
        &\ \ \ \ \ \ \ \ \ \ \ \ \ \ \ \ \ \ \ \ \ \ \ \ \ \ \ \ \ \ - 2b \  \E_{\ra^{i}\sim\pi^{i}_{\vtheta}}\left[ 
         \hat{Q}(s, \va^{-i}, \ra^{i})\left(
         \frac{\partial\log\pi^{i}_{\vtheta}(\ra^{i}|s)}{\partial \theta^{i}_{j}}
         \right)^{2}     
        \right] + const \Bigg)\nonumber\\
        &= b^{2} \ \E_{\ra^{i}\sim\pi^{i}_{\vtheta}}\left[ 
         \left|\left| \nabla_{\theta^{i}}\log\pi^{i}_{\vtheta}(\ra^{i}|s)\right|\right|^{2}   
        \right]\nonumber\nonumber \\
        &\ \ \ \ \ \ \ \ \ \ \ - 2b \  \E_{\ra^{i}\sim\pi^{i}_{\vtheta}}\left[ 
         \hat{Q}(s, \va^{-i}, \ra^{i})\left|\left| \nabla_{\theta^{i}}\log\pi^{i}_{\vtheta}(\ra^{i}|s)\right|\right|^{2}
        \right] + const
    \end{align}
    As the leading coefficient is positive, the quadratic achieves the minimum at
    \begin{align}
        b^{\text{optimal}} = \frac{ \E_{\ra^{i}\sim\pi^{i}_{\vtheta}}\left[ 
        \hat{Q}(\rs, \va^{-i}, \ra^{i}) \left|\left| \nabla_{\theta^{i}}\log \pi^{i}_{\vtheta}(\ra^{i}|s) \right|\right|^{2} \right] }{  
        \E_{\ra^{i}\sim\pi^{i}_{\vtheta}}\left[ 
        \left|\left| 
        \nabla_{\theta^{i}}\log \pi^{i}_{\vtheta}(\ra^{i}|\rs) 
        \right|\right|^{2} \right]}\nonumber
    \end{align}
\end{proof}

\subsection{Remarks about the surrogate optimal baseline}
In the paper, we discussed the impracticality of the above baseline. To handle this, we noticed that the policy $\pi^{i}_{\vtheta}(\ra^{i}|s)$, at state $s$, is determined by the output layer, $\psi^{i}_{\vtheta}(s)$, of an actor neural network. With this representation, in order to handle the impracticality of the above optimal baseline, we considered a minimisation objective, the \textsl{surrogate local variance}, given by
\begin{smalleralign}
    \mathbf{Var}_{\ra^{i}\sim\pi^{i}_{\vtheta}}\left[ 
    \nabla_{\psi^{i}_{\vtheta}}\log\pi^{i}_{\vtheta}\left(\ra^{i}|\psi^{i}_{\vtheta}(s)\right)
    \left(\hat{Q}(s, \va^{-i}, \ra^{i}) - b(s, \va^{-i})\right)
    \right]\nonumber
\end{smalleralign}
As a corollarly to the proof, the surrogate version of the optimal baseline (which we refer to as OB) was proposed, and it is given by
\begin{smalleralign}
    &b^{*}(\rs, \rva^{-i}) 
    = \frac{ \E_{\ra^{i}\sim\pi^{i}_{\vtheta}}\left[ \hat{Q}(\rs, \rva^{-i}, \ra^{i})
    \left|\left|
    \nabla_{\psi^{i}_{\vtheta}(\rs)}\log\pi^{i}\left(\ra^{i}|\psi^{i}_{\vtheta}(\rs)\right)
    \right|\right|^{2} \right]  }{ \E_{\ra^{i}\sim\pi^{i}_{\vtheta}}\left[ \left|\left|
    \nabla_{\psi^{i}_{\vtheta}(\rs)}\log\pi^{i}\left(\ra|\psi^{i}_{\vtheta}(\rs)\right)
    \right|\right|^{2} \right]  }.\nonumber
\end{smalleralign}

\begin{remark}
The 
$x^{i}_{\psi^{i}_{\vtheta}}$ measure, for which {\small $b^{*}(s, \va^{-i}) 
= \E_{\ra^{i}\sim\pi^{i}_{\vtheta}}\left[
\hat{Q}(s, \va^{-i}, \ra^{i}) \right]$}, is generally given by
\begin{align}
    \label{eq:x-measure-def}
    x^{i}_{\psi^{i}_{\vtheta}}\left(\ra^{i}|\rs\right) = \frac{ \pi^{i}_{\vtheta}\left( \ra^{i}|\rs \right) \left|\left| \nabla_{\theta^{i}}\log\pi^{i}_{\vtheta}\left( \ra^{i}|\rs \right) \right|\right|^{2} }{
    \E_{\ra^{i}\sim\pi^{i}_{\vtheta}}\left[ 
    \left|\left| \nabla_{\theta^{i}}\log\pi^{i}_{\vtheta}\left( \ra^{i}|\rs \right) \right|\right|^{2} 
    \right]
    }
\end{align}
\end{remark}

Let us introduce the definition of the $\softmax$ function, which is the subject of the next definition. For a vector $\vz\in\mathbb{R}^{d}$, we have $\softmax(\vz) = \left( \frac{e^{z_{1}}}{\eta}, \dots, \frac{e^{z_{d}}}{\eta}\right)$, where $\eta = \sum_{j=1}^{d}e^{z_{j}}$. We write
$\softmax\left( \psi^{i}_{\vtheta}(s)\right)\left( a^{i} \right) = \frac{\exp\left(\psi^{i}_{\vtheta}(s)(a^{i})\right)  }{ \sum_{\tilde{a}^{i}}\exp\left(\psi^{i}_{\vtheta}(s)(\tilde{a}^{i})\right) }$.

\begin{remark}
\label{remark:x-measure-formula}
When the action space is discrete, and the actor's policy is $\pi^{i}_{\vtheta}\left( \ra^{i}| \rs\right) = \softmax\left( \psi^{i}_{\vtheta}(s) \right)\left(\ra^{i}\right)$, then the $x^{i}_{\psi^{i}_{\vtheta}}$ measure is given by
\begin{align}
    x^{i}_{\psi^{i}_{\vtheta}}\left( \ra^{i}|\rs \right)
    =
    \frac{\pi^{i}_{\vtheta}\left(\ra^{i}|\rs\right)
    \left(1 + \left|\left|\pi_{\vtheta}^{i}(\rs)\right|\right|^{2} - 2\pi_{\vtheta}^{i}\left(\ra^{i}|\rs\right)\right)}{ 1 - ||\pi_{\vtheta}^{i}(\rs)||^{2}}\nonumber
\end{align}
\end{remark}
\begin{proof}
    \label{proof:x-measure}
    As we do not vary states $\rs$ and parameters $\vtheta$ in this proof, let us drop them from the notation for $\pi^{i}_{\vtheta}$, and $\psi^{i}_{\vtheta}(\rs)$, hence writing $\pi^{i}\left(\ra^{i}\right) = \softmax\left( \psi^{i} \right)\left(\ra^{i}\right)$. Let us compute the partial derivatives:
    \begin{align}
        &\frac{ \partial\log \pi^{i}\left(a^{i}\right) }{ \partial \psi^{i}\left( \tilde{a}^{i} \right)}
        = 
        \frac{\partial \log \softmax\left( \psi^{i}\right)\left(a^{i}\right) }
        { \partial \psi^{i}\left( \tilde{a}^{i} \right) } 
        =
        \frac{\partial}{\partial \psi^{i}\left( \tilde{a}^{i} \right)}\left[ \log
        \frac{ \exp\left( \psi^{i}\left(a^{i}\right) \right) }
        {
        \sum_{\hat{a}^{i}}\exp\left( \psi^{i}\left(\hat{a}^{i}\right)\right)
        } \right]\nonumber\\
        &=
        \frac{\partial}{\partial \psi^{i}\left( \tilde{a}^{i} \right)}\left[ \psi^{i}\left(a^{i}\right) 
        -
        \log\sum_{\hat{a}^{i}}\exp\left( \psi^{i}\left(\hat{a}^{i}\right)\right) \right]\nonumber\\
        &= \bold{I}\left(a^{i} = \tilde{a}^{i}\right)
        -
        \frac{\exp\left( \psi^{i}\left(
        \tilde{a}^{i}\right)\right) }{\sum_{\hat{a}^{i}}\exp\left( \psi^{i}\left(\hat{a}^{i}\right)\right)} = 
        \bold{I}\left(a^{i} = \tilde{a}^{i}\right)
        - \pi^{i}\left(\tilde{a}^{i}\right)\nonumber
    \end{align}
    where $\mathbf{I}$ is the indicator function, taking value $1$ if the stetement input to it is true, and $0$ otherwise. Taking $\ve_{k}$ to be the standard normal vector with $1$ in $k^{\text{th}}$ entry, we have the gradient
    \begin{align}
        \label{eq:grad-of-log}
        \nabla_{\psi^{i}}\log\pi^{i}\left( a^{i} \right) = \ve_{a^{i}} - \pi^{i} 
    \end{align}
    which has the squared norm
    \begin{align}
        & \left|\left|\nabla_{\psi^{i}}\log\pi^{i}\left( a^{i} \right)\right|\right|^{2} = \left|\left|\ve_{a^{i}} - \pi^{i}\right|\right|^{2} = \left(1 - \pi^{i}\left(a^{i}\right) \right)^{2} + \sum_{\tilde{a}^{i}\neq a^{i}}\left(-\pi^{i}\left( \tilde{a}^{i} \right)\right)^{2} \nonumber\\
        &= 1 + \sum_{\tilde{a}^{i}}\left(-\pi^{i}\left( \tilde{a}^{i} \right)\right)^{2} - 2\pi^{i}\left( a^{i} \right) = 1 + \left|\left| \pi^{i} \right|\right|^{2} - 2\pi^{i}\left( a^{i} \right)\nonumber.
    \end{align}
    The expected value of this norm is 
    \begin{align}
        &\E_{\ra^{i}\sim\pi^{i}}\left[  
        1 + \left|\left| \pi^{i} \right|\right|^{2} - 2\pi^{i}\left( a^{i} \right)
        \right] = 
        1 + \left|\left| \pi^{i} \right|\right|^{2} - 
        \E_{\ra^{i}\sim\pi^{i}}\left[  
        2\pi^{i}\left( a^{i} \right)
        \right]\nonumber\\
        &= 1 + \left|\left| \pi^{i} \right|\right|^{2}
        - 2\sum_{\tilde{a}^{i}}\left( \pi^{i}\left(a^{i}\right) \right)^{2} = 1  - \left|\left| \pi^{i} \right|\right|^{2}\nonumber
    \end{align}
    which combined with Equation \ref{eq:x-measure-def} finishes the proof.
\end{proof}

\subsection{Proof of Theorem \ref{thm:excess}}
\begin{restatable}{theorem}{esvtheorem}
\label{thm:excess}
The excess surrogate local variance for baseline $b$ satisfies
\vspace{-5pt}
\begin{smalleralign}
    \Delta \mathbf{Var}(b) = \left(b - b^{*}(s, \va^{-i})\right)^{2}
    \E_{\ra^{i}\sim\pi^{i}_{\vtheta}}\left[ \left|\left|
    \nabla_{\psi^{i}_{\vtheta}}\log\pi^{i}\left(\ra^{i}\big|\psi^{i}_{\vtheta}(s)\right)
    \right|\right|^{2} 
    \right]\nonumber
    \nonumber
\end{smalleralign}
In particular, the excess variance of the vanilla MAPG and COMA estimators satisfy 
\vspace{-5pt}
\begin{smalleralign}
    &\Delta \mathbf{Var}_{\text{MAPG}} \ \leq \  D_{i}^{2}\left(\mathbf{Var}_{\ra^{i}\sim\pi^{i}_{\vtheta}}\big[ A^{i}_{\vtheta}(s, \va^{-i}, \ra^{i}) \big] + Q_{\vtheta}^{-i}(s, \va^{-i})^{2} \right) \ \leq \ D_{i}^{2}\ \left(\epsilon_{i}^{2} + \left[ \frac{\beta}{1-\gamma}\right]^{2} \right) \nonumber\\
    & \Delta \mathbf{Var}_{\text{COMA}}  \ \leq \ 
    D_{i}^{2}\ \mathbf{Var}_{\ra^{i}\sim\pi^{i}_{\theta}}\big[ A^{i}_{\vtheta}(s, \va^{-i}, \ra^{i}) \big]
    \leq \ \left(\epsilon_{i}D_{i} \right)^{2} \nonumber
\end{smalleralign}
where 
    {\small 
    $D_{i} = \sup_{a^{i}}\left|\left|\nabla_{\psi_{\vtheta}^{i}}\log\pi^{i}_{\vtheta}\left(a^{i}|\psi^{i}_{\vtheta}(s)\right)\right|\right|,
    \ \text{and} \
    \epsilon_{i} = \sup_{s, \va^{-i}, a^{i}}\left|A^{i}_{\vtheta}(s, \va^{-i}, a^{i})\right| 
    $}.
\end{restatable}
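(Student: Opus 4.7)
The plan is to exploit the quadratic structure of the surrogate local variance in $b$ that is already implicit in the proof of Theorem \ref{theorem:ob-marl}, and then specialize to the MAPG ($b=0$) and COMA ($b = Q^{-i}_\vtheta$) baselines via the decomposition $\hat{Q}(s,\va^{-i},\ra^{i}) = A^{i}_\vtheta(s,\va^{-i},\ra^{i}) + Q^{-i}_\vtheta(s,\va^{-i})$ combined with Cauchy--Schwarz.

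First I would repeat the calculation in Theorem \ref{theorem:ob-marl}, but with $\nabla_{\psi^{i}_\vtheta}\log\pi^{i}$ replacing $\nabla_{\theta^{i}}\log\pi^{i}$. This yields the quadratic
\begin{align*}
    \mathbf{Var}_{\ra^{i}\sim\pi^{i}_\vtheta}\!\left[\bigl(\hat{Q}(s,\va^{-i},\ra^{i}) - b\bigr)\nabla_{\psi^{i}_\vtheta}\log\pi^{i}(\ra^{i}|s)\right]
    = b^{2}\,\E\!\left[\|\nabla_{\psi^{i}_\vtheta}\log\pi^{i}\|^{2}\right]
      - 2b\,\E\!\left[\hat{Q}\,\|\nabla_{\psi^{i}_\vtheta}\log\pi^{i}\|^{2}\right] + C,
\end{align*}
where $C$ is independent of $b$ and the minimizer is $b^{*}(s,\va^{-i})$. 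Completing the square immediately gives the identity $\Delta\mathbf{Var}(b) = (b-b^{*})^{2}\,\E[\|\nabla_{\psi^{i}_\vtheta}\log\pi^{i}\|^{2}]$.

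Next, for COMA I would use the fact that its counterfactual baseline is $b_{\text{COMA}}(s,\va^{-i}) = \E_{\ra^{i}\sim\pi^{i}_\vtheta}[\hat{Q}(s,\va^{-i},\ra^{i})] = Q^{-i}_\vtheta(s,\va^{-i})$. Substituting $\hat{Q} = A^{i}_\vtheta + Q^{-i}_\vtheta$ into $b^{*}$ and pulling $Q^{-i}_\vtheta$ out of the $\ra^{i}$-expectation (since it does not depend on $\ra^{i}$) yields $b_{\text{COMA}} - b^{*} = -\E[A^{i}_\vtheta\|\nabla\|^{2}]/\E[\|\nabla\|^{2}]$. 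A Cauchy--Schwarz step, $\bigl(\E[A^{i}_\vtheta\|\nabla\|^{2}]\bigr)^{2} \leq \E[(A^{i}_\vtheta)^{2}\|\nabla\|^{2}]\,\E[\|\nabla\|^{2}]$, then produces $\Delta\mathbf{Var}_{\text{COMA}} \leq D_{i}^{2}\,\E_{\ra^{i}}[(A^{i}_\vtheta)^{2}] = D_{i}^{2}\,\mathbf{Var}_{\ra^{i}}[A^{i}_\vtheta]$, where the equality uses that the advantage has zero mean under $\pi^{i}_\vtheta$; the outer bound $(\epsilon_{i}D_{i})^{2}$ follows from $|A^{i}_\vtheta|\leq \epsilon_{i}$.

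For vanilla MAPG the baseline is $b=0$, so $\Delta\mathbf{Var}_{\text{MAPG}} = (b^{*})^{2}\E[\|\nabla\|^{2}] = \bigl(\E[\hat{Q}\|\nabla\|^{2}]\bigr)^{2}/\E[\|\nabla\|^{2}]$. The same Cauchy--Schwarz argument gives $\Delta\mathbf{Var}_{\text{MAPG}} \leq D_{i}^{2}\,\E_{\ra^{i}}[\hat{Q}^{2}]$; expanding $\hat{Q}^{2} = (A^{i}_\vtheta + Q^{-i}_\vtheta)^{2}$ and using $\E_{\ra^{i}}[A^{i}_\vtheta]=0$ to kill the cross term delivers exactly $D_{i}^{2}\bigl(\mathbf{Var}_{\ra^{i}}[A^{i}_\vtheta] + Q^{-i}_\vtheta(s,\va^{-i})^{2}\bigr)$, and the numerical upper bound is closed by the boundedness of $A^{i}_\vtheta$ and the bound $|Q^{-i}_\vtheta|\leq \beta/(1-\gamma)$ from the first preliminary remark.

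The main obstacle I anticipate is not any single technical step but careful bookkeeping: making sure the quadratic completion is done with the surrogate gradient $\nabla_{\psi^{i}_\vtheta}$ (where the chain rule does not appear in the computation), and arranging the Cauchy--Schwarz application so that the decomposition $\hat{Q} = A^{i}_\vtheta + Q^{-i}_\vtheta$ produces the two additive pieces $\mathbf{Var}[A^{i}_\vtheta] + (Q^{-i}_\vtheta)^{2}$ for MAPG (no stray factor $2$ from a cross-term thanks to $\E[A^{i}_\vtheta]=0$), while causing a clean cancellation $b_{\text{COMA}} - b^{*}$ that isolates only the advantage term for COMA.
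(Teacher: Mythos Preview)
Your proposal is correct and follows essentially the same route as the paper. The only cosmetic difference is that where you invoke Cauchy--Schwarz in the form $\bigl(\E_{\pi^{i}}[f\,\|\nabla\|^{2}]\bigr)^{2}\le \E_{\pi^{i}}[f^{2}\|\nabla\|^{2}]\,\E_{\pi^{i}}[\|\nabla\|^{2}]$, the paper passes to the reweighted measure $x^{i}_{\psi^{i}_{\vtheta}}$ and applies Jensen $\E_{x}[f]^{2}\le \E_{x}[f^{2}]$; unwinding the definition of $x^{i}_{\psi^{i}_{\vtheta}}$ shows these are the same inequality, and the remaining steps (bounding $\|\nabla\|^{2}\le D_{i}^{2}$ and decomposing $\E_{\pi^{i}}[\hat{Q}^{2}]$) coincide.
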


\begin{proof}
    \label{proof:comparison}
    The first part of the theorem (the formula for excess variance) follows from Equation \ref{eq:quadratic-of-ob}.
    For the rest of the statements, it suffices to show the first of each inequalities, as the later ones follow directly from the fact that $|Q_{\vtheta}(s, \va)| \leq \frac{\beta}{1-\gamma}$, $\mathbf{Var}_{\ra^{i}\sim\pi^{i}_{\vtheta}}\left[ A_{\vtheta}^{i}(s, \va^{-i}, \ra^{i}) \right] = \E_{\ra^{i}\sim\pi^{i}_{\vtheta}}\left[ A_{\vtheta}^{i}(s, \va^{-i}, \ra^{i})^{2} \right] $, and the definition of $\epsilon_{i}$.
    Let us first derive the bounds for $\Delta \mathbf{Var}_{\text{MAPG}}$. Let us, for short-hand, define
    \begin{align}
        &c_{\vtheta}^{i} := \E_{\ra^{i}\sim\pi^{i}_{\vtheta}}\left[ 
        \left|\left|
        \nabla_{\psi^{i}_{\vtheta}}\log \pi^{i}_{\vtheta}\left(\ra^{i}|\psi^{i}_{\vtheta}\right)
        \right|\right|^{2} \right]
        \nonumber
    \end{align}
    We have
    \begin{align}
        &\Delta\mathbf{Var}_{\text{MAPG}} = \Delta \mathbf{Var}(0) 
        = c_{\vtheta}^{i} \ b^{*}(s, \va^{-i})^{2} 
        = c_{\vtheta}^{i} \ \E_{\ra^{i}\sim x_{\psi^{i}_{\vtheta}}^{i}}
        \left[ \hat{Q}(s, \va^{-i}, \ra^{i}) \right]^{2} \nonumber\\
        &\leq \ c_{\vtheta}^{i} \ \E_{\ra^{i}\sim x_{\psi^{i}_{\vtheta}}^{i}}
        \left[ \hat{Q}(s, \va^{-i}, \ra^{i})^{2} \right]
        = c_{\vtheta}^{i}\E_{\ra^{i}\sim \pi_{\vtheta}^{i}}
        \left[
        \frac{\hat{Q}(s, \va^{-i}, \ra^{i})^{2}\left|\left|
        \nabla_{\psi^{i}_{\vtheta}}\log \pi^{i}_{\vtheta}\left(\ra^{i}|\psi^{i}_{\vtheta}\right)\right|\right|^{2} }{c_{\theta}^{i}} \right]\nonumber \\
        &= \ \E_{\ra^{i}\sim \pi_{\vtheta}^{i}}\left[
        \hat{Q}(s, \va^{-i}, \ra^{i})^{2}
        \left|\left|
        \nabla_{\psi^{i}_{\vtheta}}\log \pi^{i}_{\vtheta}\left(\ra^{i}|\psi^{i}_{\vtheta}(s)\right)\right|\right|^{2} 
        \right]\nonumber\\
        & \ \ \ \ \ \ \ \ \ \ \ \ \leq \E_{\ra^{i}\sim \pi_{\vtheta}^{i}}\left[
        \hat{Q}(s, \va^{-i}, \ra^{i})^{2} D_{i}^{2}\right] \nonumber\\ 
        &= \ D_{i}^{2}\left(
        \E_{\ra^{i}\sim \pi_{\vtheta}^{i}}\left[
        \hat{Q}(s, \va^{-i}, \ra^{i})^{2} \right] - \E_{\ra^{i}\sim \pi_{\vtheta}^{i}}\left[
        \hat{Q}(s, \va^{-i}, \ra^{i}) \right]^{2} + \E_{\ra^{i}\sim \pi_{\vtheta}^{i}}\left[
        \hat{Q}(s, \va^{-i}, \ra^{i}) \right]^{2} \right)\nonumber\\
        &= \ D_{i}^{2}\left( \mathbf{Var}_{\ra^{i}\sim\pi^{i}_{\vtheta}}
        \left[ \hat{Q}(s, \va^{-i}, \ra^{i}) \right] + \hat{Q}^{-i}(s, \va^{-i})^{2} \right)\nonumber\\
        & = D_{i}^{2}\left( \mathbf{Var}_{\ra^{i}\sim\pi^{i}_{\vtheta}}
        \left[ \hat{A}^{i}(s, \va^{-i}, \ra^{i}) \right] + \hat{Q}^{-i}(s, \va^{-i})^{2} 
        \right)\nonumber
    \end{align}
    which finishes the proof for MAPG. For COMA, we have
    \begin{align}
        &\Delta \mathbf{Var}_{\text{COMA}} \ = \ \Delta\mathbf{Var}\left( \hat{Q}^{-i}(s, \va^{-i})\right) =  c_{\vtheta}^{i}
        \left( b^{*}(s, \va^{-i}) - \hat{Q}^{-i}(s, \va^{-i})\right)^{2} \nonumber\\
        &= \ c_{\vtheta}^{i}\left( \E_{\ra^{i}\sim x_{\psi^{i}_{\vtheta}}^{i}}\left[ \hat{Q}(s, \va^{-i}, \ra^{i}) \right] - \hat{Q}^{-i}(s, \va^{-i}) \right)^{2} 
        \nonumber\\
        &= c_{\vtheta}^{i} \E_{\ra^{i}\sim x^{i}_{\psi^{i}_{\vtheta}} }
        \left[ \hat{Q}(s, \va^{-i}, \ra^{i})  - \hat{Q}^{-i}(s, \va^{-i}) \right]^{2} 
        \nonumber\\
        &= \ c_{\vtheta}^{i} \E_{\ra^{i}\sim x^{i}_{{\psi}^{i}_{\vtheta}} }\left[ \hat{A}^{i}(s, \va^{-i}, \ra^{i})  \right]^{2}
        \nonumber\\
        &\leq \ c_{\vtheta}^{i} \E_{\ra^{i}\sim x^{i}_{ {\psi}^{i}_{\vtheta} } }\left[ \hat{A}^{i}(s, \va^{-i}, \ra^{i})^{2}  \right] 
        \nonumber\\
        &= \ c_{\vtheta}^{i}\E_{\ra^{i}\sim \pi_{\vtheta}^{i}}\left[ \hat{A}^{i}(s, \va^{-i}, \ra^{i})^{2}\frac{\left|\left|\nabla_{\psi^{i}_{\vtheta}}\log \pi^{i}_{\vtheta}\left(\ra^{i}|\psi^{i}_{\vtheta}(s)\right)\right|\right|^{2} }{c_{\vtheta}^{i}}  \right] 
        \nonumber\\
        &= \ \E_{\ra^{i}\sim \pi_{\vtheta}^{i}}\left[ \hat{A}^{i}(s, \va^{-i}, \ra^{i})^{2}\left|\left|\nabla_{\psi^{i}_{\vtheta}}\log \pi^{i}_{\vtheta}\left(\ra^{i}|\psi^{i}_{\vtheta}(s)\right)\right|\right|^{2}  
        \right]
        \nonumber\\
        &\leq \ 
        \E_{\ra^{i}\sim \pi_{\vtheta}^{i}}\left[ D_{i}^{2}\hat{A}^{i}(s, \va^{-i}, \ra^{i})^{2} 
        \right] \ \leq \ \left(\epsilon_{i}D_{i}\right)^{2}
        \nonumber
    \end{align}
    which finishes the proof.
\end{proof}

\newpage

\section{Pytorch Implementations of the Optimal Baseline}
\label{appendix:implementation}

First, we import necessary packages, which are \textbf{PyTorch} \cite{paszke2019pytorch}, and its \textbf{nn.functional} sub-package. These are standard Deep Learning packages used in RL \cite{SpinningUp2018}.
\label{code:imports}
\lstinputlisting[language=Python]{code/imports.py}
\vspace{-5pt}
We then implement a simple method that normalises a row vector, so that its (non-negative) entries sum up to $1$, making the vector a probability distribution.
\label{code:normalise}
\lstinputlisting[language=Python]{code/normalize.py}
\vspace{-5pt}
The \textbf{discrete} OB is an exact dot product between the measure $x^{i}_{\psi^{i}_{\vtheta}}$, and available values of $\hat{Q}$.
\label{code:discrete-ob}
\lstinputlisting[language=Python]{code/discrete_ob.py}
\vspace{-5pt}
In the \textbf{continuos} case, the measure $x^{i}_{\psi^{i}_{\vtheta}}$ and $Q$-values can only be sampled at finitely many points.
\label{code:continuous-ob}
\lstinputlisting[language=Python]{code/continuous_ob.py}
\vspace{-5pt}
We can incorporate it into our MAPG algorith by simply replacing the values of advantage with the values of X, in the buffer. Below, we present a discrete example
\label{code:apply-discrete-ob}
\lstinputlisting[language=Python]{code/apply_discrete_ob.py}
\vspace{-5pt}
and a continous one
\label{code:apply-continuous-ob}
\lstinputlisting[language=Python]{code/apply_continuous_ob.py}

\section{Computations for the Numerical Toy Example}

Here we prove that the quantities in table are filled properly.
\begin{table}[h!]
    \begin{adjustbox}{width=\columnwidth, center}
    \begin{threeparttable}
        \vspace{-10pt}
             \begin{tabular}{c | c c c c c c| c c } \toprule
             \\[-1em]
             $\ra^{i}$ 
             & $\psi^{i}_{\vtheta}(\ra^{i})$ 
             & $\pi_{\vtheta}^{i}(\ra^{i})$ 
             & $x_{\psi^{i}_{\vtheta}}^{i}(\ra^{i})$ 
             & $\hat{Q}( \va^{-i}, \ra^{i})$ 
             & $\hat{A}^{i}(\va^{-i}, \ra^{i})$
             & $\hat{X}^{i}(\va^{-i}, \ra^{i})$
             &  Method & Variance \\ [0.6ex]
             \midrule
             1 & $\log 8$ & $0.8$ & 0.14 & $2$ & $-9.7$ & $-41.71$ & MAPG & \textbf{1321}\\ 
             2 & $0$ & $0.1$ &  0.43 & $1$ &$-10.7$ & $-42.71$ & COMA & \textbf{1015}\\
             3 & $0$ & $0.1$ &  0.43 & $100$ & $88.3$ & $56.29$ & OB & \textbf{\color{red}{673}}\\
             \bottomrule
             \end{tabular}
    \end{threeparttable}
    \end{adjustbox}
    \vspace{-15pt}
\end{table}

\begin{proof}
    \label{proof:toy-example}
    In this proof, for convenience, the multiplication and exponentiation of vectors is element-wise.
    Firstly, we trivially obtain the column $\pi^{i}_{\vtheta}(\ra^{i})$, by taking $\softmax$ over $\psi^{i}_{\vtheta}(\ra^{i})$. This allows us to compute the counterfactual baseline of COMA, which is
    \begin{smalleralign}
        \label{eq:compute-coma}
        &\hat{Q}^{-i}(\va^{-i})
        = \E_{\ra^{i}\sim\pi^{i}_{\vtheta}}\left[ \hat{Q}\left(
        \va^{-i}, \ra^{i} \right) \right]
        = \sum_{a^{i}=1}^{3}\pi^{i}_{\vtheta}(a^{i})\hat{Q}\left(\va^{-i}, a^{i}\right)\nonumber\\
        &= 0.8\times 2 + 0.1 \times 1 + 0.1 \times 100 = 1.6 + 0.1 + 10 = 11.7\nonumber
    \end{smalleralign}
    By subtracting this value from the column $\hat{Q}(\va^{-i}, \ra^{i})$, we obtain the column $\hat{A}^{i}(\va^{-i}, \ra^{i})$.
    
    Let us now compute the column of $x_{\psi^{i}_{\vtheta}}^{i}$. For this, we use Remark \ref{remark:x-measure-formula}. We have
    \begin{align}
        &\left|\left| \pi^{i}_{\vtheta} \right|\right|^{2} 
        = 0.8^{2} + 0.1^{2} + 0.1^{2} = 0.66\nonumber
    \end{align}
    and $1+ \left|\left| \pi^{i}_{\vtheta} \right|\right|^{2} - 2\pi^{i}_{\vtheta}(\ra^{i}) = 1.66 - 2\pi^{i}_{\vtheta}(\ra^{i})$, which is $0.06$ for $\ra^{i}=1$, and $1.46$ when $\ra^{i}=2, 3$.
    For $\ra^{i}=1$, we have that
    \begin{align}
        \pi^{i}_{\vtheta}(\ra^{i})\left( 1+ \left|\left| \pi^{i}_{\vtheta} \right|\right|^{2} - 2\pi^{i}_{\vtheta}(\ra^{i})\right) = 0.8\times0.06 = 0.048\nonumber
    \end{align}
    and for $\ra^{i}=2, 3$, we have
    \begin{align}
        \pi^{i}_{\vtheta}(\ra^{i})\left( 1+ \left|\left| \pi^{i}_{\vtheta} \right|\right|^{2} - 2\pi^{i}_{\vtheta}(\ra^{i})\right) = 0.1\times1.46 = 0.146\nonumber
    \end{align}
    We obtain the column $x^{i}_{\vtheta}(\ra^{i})$ by normalising the vector $(0.048, 0.146, 0.146)$.
    Now, we can compute OB, which is the dot product of the columns $x^{i}_{\psi^{i}_{\vtheta}}(\ra^{i})$ and $\hat{Q}(\va^{-i}, \ra^{i})$
    \begin{align}
        b^{*}(\va^{-i}) = 0.14\times2 + 0.43\times1 + 0.43\times100 = 0.28 + 0.43 + 43 = 43.71\nonumber
    \end{align}
    We obtain the column $\hat{X}^{i}(\va^{-i}, \ra^{i})$ after subtracting $b^{*}(\va^{-i})$ from the column $\hat{Q}(\va^{-i}, \ra^{i})$.
    
    Now, we can compute and compare the variances of vanilla MAPG, COMA, and OB. 
    The surrogate local variance of an MAPG estimator $\rvg^{i}(b)$ is
    \begin{smalleralign}
        &\mathbf{Var}_{\ra^{i}\sim\pi^{i}_{\vtheta}}\left[
        \rvg^{i}(b)
        \right]
        =
        \mathbf{Var}_{\ra^{i}\sim\pi^{i}_{\vtheta}}\left[
        \left( \hat{Q}^{i}\left(\va^{-i}, \ra^{i}\right)
        - b(\va^{-i})\right)
        \nabla_{\psi^{i}_{\vtheta}}\log\pi^{i}_{\theta}(a^{i})
        \right]\nonumber\\
        &= \text{sum}\left(
        \E_{\ra^{i}\sim\pi^{i}_{\vtheta}}\left[
        \left(
        \left[\hat{Q}^{i}\left(\va^{-i}, \ra^{i}\right)
        - b(\va^{-i})\right]
        \nabla_{\psi^{i}_{\vtheta}}\log\pi^{i}_{\theta}(a^{i})
        \right)^{2}
        \right]
        - 
        \E_{\ra^{i}\sim\pi^{i}_{\vtheta}}\left[
        \left( \hat{Q}^{i}\left(\va^{-i}, \ra^{i}\right)
        - b(\va^{-i})\right)
        \nabla_{\psi^{i}_{\vtheta}}\log\pi^{i}_{\theta}(a^{i})
        \right]^{2} \right)\nonumber\\
        &= \text{sum}\left(
        \E_{\ra^{i}\sim\pi^{i}_{\vtheta}}\left[
        \left(
        \left[\hat{Q}^{i}\left(\va^{-i}, \ra^{i}\right)
        - b(\va^{-i})\right]
        \nabla_{\psi^{i}_{\vtheta}}\log\pi^{i}_{\theta}(a^{i})
        \right)^{2}
        \right]\right)
        - 
        \text{sum}\left(
        \E_{\ra^{i}\sim\pi^{i}_{\vtheta}}\left[
        \hat{Q}^{i}\left(\va^{-i}, \ra^{i}\right)
        \nabla_{\psi^{i}_{\vtheta}}\log\pi^{i}_{\theta}(a^{i})
        \right]^{2} \right)\nonumber
    \end{smalleralign}
    where sum is taken element-wise. The last equality follows be linearity of element-wise summing, and the fact that $b$ is a baseline.
    We compute the variance of vanilla MAPG ($\rvg^{i}_{\text{MAPG}}$), COMA ($\rvg^{i}_{\text{COMA}}$), and OB ($\rvg^{i}_{X}$).
    Let us derive the first moment, which is the same for all methods
    \begin{align}
        &\E_{\ra^{i}\sim\pi^{i}_{\vtheta}}\left[ \hat{Q}\left(\va^{-i}, \ra^{i}\right)\nabla_{\psi^{i}_{\vtheta}}\log \pi^{i}_{\vtheta}(\ra^{i}) \right] = \sum_{a^{i}=1}^{3}\pi^{i}_{\vtheta}(a^{i})\hat{Q}\left(\va^{-i}, a^{i}\right)\nabla_{\psi^{i}_{\vtheta}}\log\pi^{i}_{\theta}(a^{i})\nonumber\\
        &\text{recalling Equation \ref{eq:grad-of-log}}\nonumber\\
        &= \ \sum_{a^{i}=1}^{3}\pi^{i}_{\vtheta}(a^{i})\hat{Q}\left(\va^{-i}, a^{i}\right)\left( \ve_{a^{i}} - \pi^{i}_{\vtheta} \right)\nonumber \\
        & =
        \ 0.8\times 2 \times \begin{bmatrix}
        0.2\\
        -0.1\\
        -0.1
        \end{bmatrix} + 0.1\times 1 \times \begin{bmatrix}
        -0.8\\
        0.9\\
        -0.1
        \end{bmatrix} +0.1\times 100 \times \begin{bmatrix}
        -0.8\\
        -0.1\\
        0.9
        \end{bmatrix}\nonumber\\
        &= \begin{bmatrix}
        0.32\\
        -0.16\\
        -0.16\\
        \end{bmatrix} +
        \begin{bmatrix}
        -0.08\\
        0.09\\
        -0.01\\
        \end{bmatrix} +
        \begin{bmatrix}
        -8\\
        -1\\
        9\\
        \end{bmatrix} =
        \begin{bmatrix}
        -7.76\\
        -1.07\\
        8.83\\
        \end{bmatrix}\nonumber
    \end{align}
    Now, let's compute the second moment for each of the methods, starting from vanilla MAPG
    \begin{align}
        &\E_{\ra^{i}\sim\pi^{i}_{\vtheta}}\left[ \hat{Q}\left(\va^{-i}, \ra^{i}\right)^{2}\left(\nabla_{\psi_{\vtheta}^{i}}\log \pi^{i}_{\vtheta}(\ra^{i})\right)^{2} \right] \nonumber\\
        &= \sum_{a^{i}=1}^{3}\pi^{i}_{\vtheta}(a^{i})\hat{Q}\left(\va^{-i}, a^{i}\right)^{2}
        \left(\nabla_{\psi^{i}_{\vtheta}}\log\pi^{i}_{\vtheta}(a^{i})\right)^{2}\nonumber\\
        &= \ \sum_{a^{i}=1}^{3}\pi^{i}_{\vtheta}(a^{i})\hat{Q}\left(\va^{-i}, a^{i}\right)^{2}\left(\ve_{a^{i}} - \pi^{i}_{\vtheta} \right)^{2}
        \nonumber\\
        &= \ 0.8\times 2^{2} \times \begin{bmatrix}
        0.2\\
        -0.1\\
        -0.1
        \end{bmatrix}^{2} + 0.1\times 1^{2} \times \begin{bmatrix}
        -0.8\\
        0.9\\
        -0.1
        \end{bmatrix}^{2} +0.1\times 100^{2} \times \begin{bmatrix}
        -0.8\\
        -0.1\\
        0.9
        \end{bmatrix}^{2}\nonumber\\
        &= 0.8\times 4 \times \begin{bmatrix}
        0.04\\
        0.01\\
        0.01
        \end{bmatrix} + 0.1 \times \begin{bmatrix}
        0.64\\
        0.81\\
        0.01
        \end{bmatrix} +0.1\times 10000 \times \begin{bmatrix}
        0.64\\
        0.01\\
        0.81
        \end{bmatrix}\nonumber\\
        &= \begin{bmatrix}
        0.128\\
        0.032\\
        0.032
        \end{bmatrix} +
        \begin{bmatrix}
        0.064\\
        0.081\\
        0.001
        \end{bmatrix} +
        \begin{bmatrix}
        640\\
        10\\
        810
        \end{bmatrix} \ = \ \begin{bmatrix}
        640.192\\
        10.113\\
        810.033
        \end{bmatrix}
        \nonumber
    \end{align}
    We have
    \begin{align}
        &\mathbf{Var}_{\ra^{i}\sim\pi^{i}_{\vtheta}}\left[ \rvg^{i}_{\text{MAPG}} \right] \nonumber\\
        &=
        \E_{\ra^{i}\sim\pi^{i}_{\vtheta}}\left[ \hat{Q}\left(\va^{-i}, \ra^{i}\right)^{2}\left(\nabla_{\psi_{\vtheta}^{i}}\log \pi^{i}_{\vtheta}(a^{i})\right)^{2} 
        \right] \nonumber\\
        &\ \ \ \ \ \ \ \ \ \ \ \ \ \ \ \ \ \ \ \ \ \ \ \ \ - \E_{\ra^{i}\sim\pi^{i}_{\vtheta}}\left[ \hat{Q}\left(\va^{-i}, \ra^{i}\right)\nabla_{\psi_{\vtheta}^{i}}\log \pi^{i}_{\vtheta}(\ra^{i}) \right]^{2}
        \nonumber\\
        &= \begin{bmatrix}
        640.192\\
        10.113\\
        810.033
        \end{bmatrix} - \begin{bmatrix}
        -7.76\\
        -1.07\\
        8.83\\
        \end{bmatrix}^{2}
        \ = \ \begin{bmatrix}
        640.192\\
        10.113\\
        810.033
        \end{bmatrix} - 
        \begin{bmatrix}
        60.2176\\
        1.1449\\
        77.9689
        \end{bmatrix}\ = \
        \begin{bmatrix}
        579.9744\\
        8.968\\
        732.064
        \end{bmatrix}
        \nonumber
    \end{align}
    So the variance of vanilla MAPG in this case is 
    \begin{align}
        \mathbf{Var}_{\ra^{i}\sim\pi^{i}_{\vtheta}}\left[ \rvg^{i}_{\text{MAPG}} \right] = 1321.007
        \nonumber
    \end{align}
    
    Let's now deal with COMA
    \begin{align}
        &\E_{\ra^{i}\sim\pi^{i}_{\vtheta}}\left[ \hat{A}^{i}
        \left(\va^{-i}, \ra^{i}\right)^{2}
        \left(\nabla_{\psi_{\vtheta}^{i}}\log \pi^{i}_{\vtheta}(\ra^{i})\right)^{2} 
        \right] \nonumber\\
        &= \sum_{a^{i}=1}^{3}\pi^{i}_{\vtheta}(a^{i})\hat{A}^{i}\left(\va^{-i}, a^{i}\right)^{2}
        \left(\nabla_{\psi^{i}_{\vtheta}}\log\pi^{i}_{\vtheta}(a^{i})\right)^{2}\nonumber\\
        &= \sum_{a^{i}=1}^{3}\pi^{i}_{\vtheta}(a^{i})
        \hat{A}^{i}\big(\va^{-i}, a^{i}\big)^{2}\left( \ve_{a^{i}} - \pi^{i}_{\vtheta} \right)^{2}\nonumber\\
        &= \ 0.8\times (-9.7)^{2} \times \begin{bmatrix}
        0.2\\
        -0.1\\
        -0.1
        \end{bmatrix}^{2} + 0.1\times (-10.7)^{2} \times \begin{bmatrix}
        -0.8\\
        0.9\\
        -0.1
        \end{bmatrix}^{2} +0.1\times 88.3^{2} \times \begin{bmatrix}
        -0.8\\
        -0.1\\
        0.9
        \end{bmatrix}^{2}\nonumber\\
        &= 0.8\times 94.09 \times \begin{bmatrix}
        0.04\\
        0.01\\
        0.01
        \end{bmatrix} + 0.1 \times 114.49 \times \begin{bmatrix}
        0.64\\
        0.81\\
        0.01
        \end{bmatrix} +0.1\times 7796.89 \times \begin{bmatrix}
        0.64\\
        0.01\\
        0.81
        \end{bmatrix}\nonumber\\
        &= \begin{bmatrix}
        3.011\\
        0.753\\
        0.753
        \end{bmatrix} +
        \begin{bmatrix}
        2.327\\
        9.274\\
        0.114
        \end{bmatrix} +
        \begin{bmatrix}
        499.001\\
        7.797\\
        631.548
        \end{bmatrix} \ = \ \begin{bmatrix}
        504.339\\
        17.824\\
        632.415
        \end{bmatrix}
        \nonumber
    \end{align}
    We have
    \begin{align}
        &\E_{\ra^{i}\sim\pi^{i}_{\vtheta}}\left[ \rvg^{i}_{\text{COMA}}\right] \nonumber\\
        =& \E_{\ra^{i}\sim\pi^{i}_{\vtheta}}
        \left[ \hat{A}^{i}\left(\va^{-i}, \ra^{i}\right)^{2}\left(\nabla_{\theta^{i}}\log \pi^{i}_{\vtheta}(\ra^{i})\right)^{2} 
        \right]  -
        \E_{\ra^{i}\sim\pi^{i}_{\vtheta}}\left[ \hat{A}^{i}\left(\va^{-i}, \ra^{i}\right)\nabla_{\theta^{i}}\log \pi^{i}_{\vtheta}(\ra^{i}) \right]^{2}\nonumber\\
        &= \begin{bmatrix}
        504.339\\
        17.824\\
        632.415
        \end{bmatrix} - \begin{bmatrix}
        -7.76\\
        -1.07\\
        8.83\\
        \end{bmatrix}^{2}
        \ = \ \begin{bmatrix}
        504.339\\
        17.824\\
        632.415
        \end{bmatrix} - 
        \begin{bmatrix}
        60.2176\\
        1.1449\\
        77.9689
        \end{bmatrix} \ = \
        \begin{bmatrix}
        444.1214\\
        16.6791\\
        554.4461
        \end{bmatrix}
        \nonumber
    \end{align}
    and we have
    \begin{align}
        \mathbf{Var}_{\ra^{i}\sim\pi^{i}_{\vtheta}}\left[ \rvg^{i}_{\text{COMA}} \right] = 1015.2466
        \nonumber
    \end{align}
    Lastly, we figure out OB
    \begin{align}
        &\E_{\ra^{i}\sim\pi^{i}_{\vtheta}}\left[ \hat{X}^{i}\left(\va^{-i}, \ra^{i}\right)^{2}
        \left(\nabla_{\psi^{i}_{\vtheta}}\log \pi^{i}_{\vtheta}(\ra^{i})\right)^{2} \right] \nonumber\\
        &= \sum_{a^{i}=1}^{3}\pi^{i}_{\vtheta}(a^{i})\hat{X}^{i}\left(\va^{-i}, a^{i}\right)^{2}
        \left(\nabla_{\psi^{i}_{\vtheta}}\log\pi^{i}_{\vtheta}(a^{i})\right)^{2}\nonumber\\
        &= \ \sum_{a^{i}=1}^{3}\pi^{i}_{\vtheta}(a^{i})\hat{X}^{i}\left(\va^{-i}, a^{i}\right)^{2}\left( \ve_{a^{i}} - \pi^{i}_{\vtheta} \right)^{2}
        \nonumber\\
        &= \ 0.8\times (-41.71)^{2} \times \begin{bmatrix}
        0.2\\
        -0.1\\
        -0.1
        \end{bmatrix}^{2} + 0.1\times (-42.71)^{2} \times \begin{bmatrix}
        -0.8\\
        0.9\\
        -0.1
        \end{bmatrix}^{2} +0.1\times 56.29^{2} \times \begin{bmatrix}
        -0.8\\
        -0.1\\
        0.9
        \end{bmatrix}^{2}\nonumber\\
        &= 0.8\times 1739.724 \times \begin{bmatrix}
        0.04\\
        0.01\\
        0.01
        \end{bmatrix} + 0.1 \times 1824.144 \times \begin{bmatrix}
        0.64\\
        0.81\\
        0.01
        \end{bmatrix} +0.1\times 3168.564 \times \begin{bmatrix}
        0.64\\
        0.01\\
        0.81
        \end{bmatrix}\nonumber\\
        &= \begin{bmatrix}
        55.6712\\
        13.92\\
        13.92
        \end{bmatrix} +
        \begin{bmatrix}
        116.7452\\
        147.756\\
        1.824
        \end{bmatrix} +
        \begin{bmatrix}
        202.788\\
        3.169\\
        256.654
        \end{bmatrix} \ = \ \begin{bmatrix}
        375.2044\\
        164.845\\
        272.398
        \end{bmatrix}
        \nonumber
    \end{align}
    We have
    \begin{align}
        &\E_{\ra^{i}\sim\pi^{i}_{\vtheta}}\left[ \rvg^{i}_{X} \right]\nonumber\\
        &= \E_{\ra^{i}\sim\pi^{i}_{\vtheta}}\left[ \hat{X}^{i}
        \left(\va^{-i},\ra^{i}\right)^{2}
        \left(\nabla_{\psi_{\vtheta}^{i}}\log \pi^{i}_{\vtheta}(\ra^{i})\right)^{2} \right] 
        -
        \E_{\ra^{i}\sim\pi^{i}_{\vtheta}}
        \left[ \hat{X}^{i}\left(\va^{-i}, \ra^{i}\right)\nabla_{\psi_{\vtheta}^{i}}\log \pi^{i}_{\theta}(\ra^{i}) \right]^{2}
        \nonumber\\
        &= \begin{bmatrix}
        375.2044\\
        164.845\\
        272.398
        \end{bmatrix} - \begin{bmatrix}
        -7.76\\
        -1.07\\
        8.83\\
        \end{bmatrix}^{2}
        \ = \ \begin{bmatrix}
         375.2044\\
        164.845\\
        272.398
        \end{bmatrix} - 
        \begin{bmatrix}
        60.2176\\
        1.1449\\
        77.9689
        \end{bmatrix}
        \ = \
        \begin{bmatrix}
        314.987\\
        163.7\\
        194.429
        \end{bmatrix}
        \nonumber
    \end{align}
    and we have
    \begin{align}
        \mathbf{Var}_{\ra^{i}\sim\pi^{i}_{\vtheta}}
        \left[ \rvg^{i}_{X} \right] = 673.116
        \nonumber
    \end{align}
 
\end{proof}

\section{Detailed Hyper-parameter Settings for Experiments}
\label{section:details-experiments}
In this section, we include the details of our experiments. Their implementations can be found in the following codebase:

\begin{tcolorbox}[colback=blue!15, colframe=blue!75]
    \begin{center}
    {\small \url{https://github.com/anynomous99/Settling-the-Variance-of-Multi-Agent-Policy-Gradients}.}
    \end{center}
\end{tcolorbox}

In COMA experiments, we use the official implementation in their codebase \cite{foerster2018counterfactual}. The only difference between COMA with and without OB is the baseline introduced, that is, the OB or the counterfactual baseline of COMA \cite{foerster2018counterfactual}.

Hyper-parameters used for COMA in the SMAC domain.
\begin{table}[h!]
    \begin{adjustbox}{width=0.7\columnwidth, center}
        \begin{tabular}{c | c c c} 
            \toprule
            Hyper-parameters & 3m & 8m & 2s3z\\
            \midrule
            actor lr & 5e-3 & 1e-2 & 1e-2\\ 
            critic lr & 5e-4 & 5e-4 & 5e-4\\
            gamma & 0.99 & 0.99 & 0.99\\
            epsilon start & 0.5 & 0.5 & 0.5\\
            epsilon finish & 0.01 & 0.01 & 0.01\\
            epsilon anneal time & 50000 & 50000 & 50000\\
            batch size & 8 & 8 & 8\\
            buffer size & 8 & 8 & 8\\
            target update interval & 200 & 200 & 200\\
            optimizer & RMSProp & RMSProp & RMSProp\\
            optim alpha & 0.99 & 0.99 & 0.99\\
            optim eps & 1e-5 & 1e-5 & 1e-5\\
            grad norm clip & 10 & 10 & 10\\
            actor network & rnn & rnn & rnn\\
            rnn hidden dim & 64 & 64 & 64\\
            critic hidden layer & 1 & 1 & 1\\
            critic hiddem dim & 128 & 128 & 128\\
            activation & ReLU & ReLU & ReLU\\
            eval episodes & 32 & 32 & 32\\
            \bottomrule
        \end{tabular}
    \end{adjustbox}
\end{table}
\clearpage
As for Multi-agent PPO, based on the official implementation \cite{mappo}, the original V-based critic is replaced by Q-based critic for OB calculation. Simultaneously, we have not used V-based tricks like the GAE estimator, when either using OB or state value as baselines, for fair comparisons.

We provide the pseudocode of our implementation of Multi-agent PPO with OB. We highlight the novel components of it (those unpresent, for example, in \cite{mappo}) in \textcolor{teal}{colour}.
\begin{algorithm}
  \caption{Multi-agent PPO with Q-critic and OB}
  \label{alg1}
  \begin{algorithmic}[1]
  \STATE Initialize $\theta$ and $\phi$, the parameters for actor $\pi$ \textcolor{teal}{and critic $Q$}
  \STATE $episode_{max} \gets step_{max}/batch\_size$
  \WHILE{$episode \leq episode_{max}$}
  \STATE Set data buffer $\bm{D} = \{ \}$
  \STATE Get initial states $s_{0}$ and observations $\bm{o_{0}}$
  \FOR{$t=0$ \TO $batch\_size$}
  \FORALL{agents $i$}
  \IF{discrete action space}
  \STATE $a_{t}^{i}, p_{\pi,t}^{i} \gets \pi(o_{t}^{i};\theta)$  // where $p_{\pi,t}^{i}$ is the probability distribution of available actions 
  \ELSIF{continuous action space}
  \STATE $a_{t}^{i}, p_{a, t}^{i} \gets \pi(o_{t}^{i};\theta)$  // where $p_{a, t}^{i}$ is the probability density of action $a_{t}^{i}$
  \ENDIF
  \STATE \textcolor{teal}{$q_{t}^{i} \gets Q(s_{t}, i, a_{t}^{i};\phi)$}
  \ENDFOR
  \STATE $s_{t+1}, o_{t+1}^{n}, r_{t} \gets$ execute $\{a_{t}^{1}...a_{t}^{n}\}$
  \IF{discrete action space}
  \STATE \textcolor{teal}{Append $[s_{t}, \bm{o_{t}}, \bm{a_{t}}, r_{t}, s_{t+1}, \bm{o_{t+1}}, \bm{q_{t}}, \bm{p_{\pi,t}}]$ to $\bm{D}$}
  \ELSIF{continuous action space}
  \STATE \textcolor{teal}{Append $[s_{t}, \bm{o_{t}}, \bm{a_{t}}, r_{t}, s_{t+1}, \bm{o_{t+1}}, \bm{q_{t}}, \bm{p_{a,t}}]$ to $\bm{D}$}
  \ENDIF
  \ENDFOR
  
  // from now all agents are processed in parallel in $D$
  \IF{discrete action space}
  \STATE \textcolor{teal}{$\bm{ob} \gets \text{optimal\_baseline}(\bm{q}, \bm{p_{\pi}})$} // use data from $D$
  \ELSIF{continuous action space}
  \STATE \textcolor{teal}{Resample $\bm{a_{t, 1...m}}, \bm{q_{t, 1...m}} \sim \bm{\mu_{t}}, \bm{\sigma_{t}}$ for each $s_{t}, \bm{o_{t}}$}
  \STATE \textcolor{teal}{$\bm{ob} \gets \text{optimal\_baseline}(\bm{a}, \bm{q}, \bm{\mu}, \bm{\sigma})$} // use resampled data
  \ENDIF
  \STATE \textcolor{teal}{$\bm{X} \gets \bm{q} - \bm{ob}$}
  \STATE \textcolor{teal}{$\text{Loss}(\theta) \gets - \text{mean}(\bm{X} \cdot \log \bm{p_{a}})$}
  \STATE Update $\theta$ with Adam/RMSProp to minimise $\text{Loss}(\theta)$
  \ENDWHILE
  \end{algorithmic}
  The critic parameter $\phi$ is trained with TD-learning \cite{sutton2018reinforcement}.
\end{algorithm}
\clearpage

Hyper-parameters used for Multi-agent PPO in the SMAC domain.
\begin{table}[h!]
    \begin{adjustbox}{width=0.6\columnwidth, center}
        \begin{tabular}{c | c } 
            \toprule
            Hyper-parameters & 3s vs 5z / 5m vs 6m / 6h vs 8z / 27m vs 30m\\
            \midrule
            actor lr & 1e-3\\ 
            critic lr & 5e-4\\
            gamma & 0.99\\
            batch size & 3200\\
            num mini batch & 1\\
            ppo epoch & 10\\
            ppo clip param & 0.2\\
            entropy coef & 0.01\\
            optimizer & Adam\\
            opti eps & 1e-5\\
            max grad norm & 10\\
            actor network & mlp\\
            hidden layper & 1\\
            hidden layer dim & 64\\
            activation & ReLU\\
            gain & 0.01\\
            eval episodes & 32\\
            use huber loss & True\\
            rollout threads & 32\\
            episode length & 100\\
            \bottomrule
        \end{tabular}
    \end{adjustbox}
\end{table}

Hyper-parameters used for Multi-agent PPO in the Multi-Agent MuJoCo domain.
\begin{table}[h!]
    \begin{adjustbox}{width=0.8\columnwidth, center}
        \begin{tabular}{c | c c c c} 
            \toprule
            Hyper-parameters & Hopper(3x1) & Swimmer(2x1) & HalfCheetah(6x1) & Walker(2x3)\\
            \midrule
            actor lr & 5e-6 & 5e-5 & 5e-6 & 1e-5\\ 
            critic lr & 5e-3 & 5e-3 & 5e-3 & 5e-3\\
            lr decay & 1 & 1 & 0.99 & 1\\
            episode limit & 1000 & 1000 & 1000 & 1000\\
            std x coef & 1 & 10 & 5 & 5\\
            std y coef & 0.5 & 0.45 & 0.5 & 0.5\\
            ob n actions & 1000 & 1000 & 1000 & 1000\\
            gamma & 0.99 & 0.99 & 0.99 & 0.99\\
            batch size & 4000 & 4000 & 4000 & 4000\\
            num mini batch & 40 & 40 & 40 & 40\\
            ppo epoch & 5 & 5 & 5 & 5\\
            ppo clip param & 0.2 & 0.2 & 0.2 & 0.2\\
            entropy coef & 0.001 & 0.001 & 0.001 & 0.001\\
            optimizer & RMSProp & RMSProp & RMSProp & RMSProp\\
            momentum & 0.9 & 0.9 & 0.9 & 0.9\\
            opti eps & 1e-5 & 1e-5 & 1e-5 & 1e-5\\
            max grad norm & 0.5 & 0.5 & 0.5 & 0.5\\
            actor network & mlp & mlp & mlp & mlp\\
            hidden layper & 2 & 2 & 2 & 2\\
            hidden layer dim & 32 & 32 & 32 & 32\\
            activation & ReLU & ReLU & ReLU & ReLU\\
            gain & 0.01 & 0.01 & 0.01 & 0.01\\
            eval episodes & 10 & 10 & 10 & 10\\
            use huber loss & True & True & True & True\\
            rollout threads & 4 & 4 & 4 & 4\\
            episode length & 1000 & 1000 & 1000 & 1000\\
            \bottomrule
        \end{tabular}
    \end{adjustbox}
\end{table}

For QMIX and COMIX baseline algorithms, we use implementation from their official codebases and keep the performance consistent with the results reported in their original papers \cite{peng2020facmac, rashid2018qmix}. MADDPG is provided along with COMIX, which is derived from its official implementation as well \cite{maddpg}. 

\clearpage
Hyper-parameters used for QMIX baseline in the SMAC domain.
\begin{table}[h!]
    \begin{adjustbox}{center}
        \begin{tabular}{c | c } 
            \toprule
            Hyper-parameters & 3s vs 5z / 5m vs 6m / 6h vs 8z / 27m vs 30m\\
            \midrule
            critic lr & 5e-4\\
            gamma & 0.99\\
            epsilon start & 1\\
            epsilon finish & 0.05\\
            epsilon anneal time & 50000\\
            batch size & 32\\
            buffer size & 5000\\
            target update interval & 200\\
            double q & True\\
            optimizer & RMSProp\\
            optim alpha & 0.99\\
            optim eps & 1e-5\\
            grad norm clip & 10\\
            mixing embed dim & 32\\
            hypernet layers & 2\\
            hypernet embed & 64\\
            critic hidden layer & 1\\
            critic hiddem dim & 128\\
            activation & ReLU\\
            eval episodes & 32\\
            \bottomrule
        \end{tabular}
    \end{adjustbox}
\end{table}

Hyper-parameters used for COMIX baseline in the Multi-Agent MuJoCo domain.
\begin{table}[h!]
    \begin{adjustbox}{center}
        \begin{tabular}{c | c } 
            \toprule
            Hyper-parameters & Hopper(3x1) / Swimmer(2x1) / HalfCheetah(6x1) / Walker(2x3)\\
            \midrule
            critic lr & 0.001\\
            gamma & 0.99\\
            episode limit & 1000\\
            exploration mode & Gaussian\\
            start steps & 10000\\
            act noise & 0.1\\
            batch size & 100\\
            buffer size & 1e6\\
            soft target update & True\\
            target update tau & 0.001\\
            optimizer & Adam\\
            optim eps & 0.01\\
            grad norm clip & 0.5\\
            mixing embed dim & 64\\
            hypernet layers & 2\\
            hypernet embed & 64\\
            critic hidden layer & 2\\
            critic hiddem dim & [400, 300]\\
            activation & ReLU\\
            eval episodes & 10\\
            \bottomrule
        \end{tabular}
    \end{adjustbox}
\end{table}

\clearpage
Hyper-parameters used for MADDPG baseline in the Multi-Agent MuJoCo domain.
\begin{table}[h!]
    \begin{adjustbox}{center}
        \begin{tabular}{c | c } 
            \toprule
            Hyper-parameters & Hopper(3x1) / Swimmer(2x1) / HalfCheetah(6x1) / Walker(2x3)\\
            \midrule
            actor lr & 0.001\\
            critic lr & 0.001\\
            gamma & 0.99\\
            episode limit & 1000\\
            exploration mode & Gaussian\\
            start steps & 10000\\
            act noise & 0.1\\
            batch size & 100\\
            buffer size & 1e6\\
            soft target update & True\\
            target update tau & 0.001\\
            optimizer & Adam\\
            optim eps & 0.01\\
            grad norm clip & 0.5\\
            mixing embed dim & 64\\
            hypernet layers & 2\\
            hypernet embed & 64\\
            actor network & mlp\\
            hidden layer & 2\\
            hiddem dim & [400, 300]\\
            activation & ReLU\\
            eval episodes & 10\\
            \bottomrule
        \end{tabular}
    \end{adjustbox}
\end{table}

\clearpage


\clearpage
\bibliographystyle{plain}
\bibliography{sample.bib}

\end{document}